\newtheorem{theorem}{Theorem}[section]
\newtheorem{definition}[theorem]{Definition}
\newtheorem{lemma}[theorem]{Lemma}
\newtheorem{example}[theorem]{Example}
\newtheorem{assumption}[theorem]{Assumption}
\newtheorem{proposition}[theorem]{Proposition}
\title{Dynamic Modeling and Equilibria in Fair Decision Making}
\author[1]{Joshua Williams}
\affiliation{Carnegie Mellon University \and
     jnwillia@cs.cmu.edu}
\author[1]{J. Zico Kolter}
\affiliation{Carnegie Mellon University \\
     and Bosch Center for AI \and
     zkolter@cs.cmu.edu}
\begin{abstract}

Recent studies on fairness in automated decision making systems have both investigated the potential \emph{future impact} of these decisions on the population at large, and emphasized that imposing ``typical'' fairness constraints such as demographic parity or equality of opportunity does not guarantee a benefit to disadvantaged groups. However, these previous studies have focused on either simple one-step cost/benefit criteria, or on discrete underlying state spaces. In this work, we first propose a natural \emph{continuous} representation of population state, governed by the Beta distribution, using a loan granting setting as a running example. Next, we apply a model of population \emph{dynamics} under lending decisions, and show that when conditional payback probabilities are estimated correctly 1) ``optimal'' behavior by lenders can lead to ``Matthew Effect'' bifurcations (i.e., ``the rich get richer and the poor get poorer''), but that 2) many common fairness constraints on the allowable policies cause groups to converge to the same equilibrium point. Last, we contrast our results in the case of misspecified conditional probability estimates with prior work, and show that for this model, different levels of group misestimation guarantees that even fair policies lead to bifurcations. We illustrate some of the modeling conclusions on real data from credit scoring.

\end{abstract}
\begin{document}

\maketitle
\pagestyle{headings}
\setcounter{page}{1}
\pagenumbering{arabic}

\section{Introduction} 

Data-driven decision making systems generally try to maximize some domain-specific utility, which, if unconstrained, has the potential to damage disadvantaged groups in the process \citep{executive2016big}. To avoid this outcome, the study of fair treatment in automated decision making commonly focuses on ensuring that automated decisions follow the moral compass of an ideal society; this society being one in which decisions and societal outcomes are not based on immutable characteristics such as race or gender. Most past work in this area has focused on enforcing \emph{constraints} that provide some notion of fairness upon the decision making process, such as demographic parity (ensuring equal representation of groups in the decisions), equality of opportunity (ensuring equal true positive rates among different groups), or blindness (ensuring protected attributes are ignored entirely in the decision making process, even through surrogate features). Prior work shows that these notions are unfortunately fundamentally incompatible with each other in most situations \citep{kleinberg2016inherent}.  Furthermore, these safeguards are made based on a short snapshot of the process, namely, the model's immediate outcome.  In response, recent studies have looked into the impact of fair decisions over time, finding that some constraints that enforce equal treatment can actively harm disadvantaged groups.

As a running example that we use throughout this paper, consider the task of a bank deciding on individuals to give loans as a function of a given person's (estimated) probability of paying back the loan; this setting is common through the literature on fair decision making, though the same principles apply to a wide variety of other tasks, such as granting school admissions \citep{kleinberg2018algorithmic}, making predictions of criminal recidivism \citep{larson2016we}, and many others.  There are many competing incentives in this setting: banks are not incentivized to give loans that will not be repaid, but a historically disadvantaged population, without access to loans, may have a harder time re-establishing credit and improving the overall financial state of the group without receiving a loan.

In this paper, we consider the fair decision making setting as an iterative, repeated process, where e.g. loan granting decisions will have future effects on the probability of different groups to repay.  We then consider the impact of different fairness constraints in this setting.  In particular, we make three contributions in this work.  First, we propose and motivate a continuous population model, governed by the Beta distribution, that captures the entire population state in terms of propensity to pay back a loan.  We show than under this model, several common notions of fairness can be expressed through a simple analytic parameterization of the cumulative distribution function. 

Second, we propose a model of population \emph{dynamics} that captures how granting or denying loans impacts the population as a whole.  We show that lenders operating to maximize returns \emph{independently} for each population group can naturally lead to the ``Matthew Effect'' \citep{merton1968matthew}, summarized by the adage, ``the rich get richer, the poor get poorer'', where the final population means of different groups bifurcate based upon their initial state. Yet, we show that for a set of intuitive assumptions of population dynamics, \emph{any} constraint, according to our definition, that enforces equal treatment guarantees that groups converge to the same distribution; This suggests that at least that within our proposed model, the seemingly negative impacts of some notions of fair treatment suggested by previous work may not dominate the long-term well-being of all groups.  

Last, we address the problem of \emph{estimation} and stereotypes to show that when payback probabilities are incorrectly estimated among groups (and indeed, previous studies have found that automated systems are less effective on non-majority groups, often a result of the data collection process), there can still be bifurcations of the population, even under fair policies.  Although these results are largely dependent upon the assumptions we make regarding population distributions and dynamics, we find that this proposed model both reinforces prior work, that fair policies can lead to convergence, while coming to a different conclusion on the effect of stereotypes.  In contrast with previous work, we show that in this model differing levels of misestimation among groups, except in trivial cases, result in fair policies \emph{never} being able to bring groups to convergence. This suggests that models of population dynamics imply different outcomes for the effect misestimation, to a greater degree than has currently been explored.  We also assess the effects of these results using a publicly available data set of loan repayment probabilities for different demographic groups.

\section{Relations to Past Work}

Recent work on fairness in machine learning arose out of a growing concern for biases seeping into algorithmic procedures \citep{kleinberg2018algorithmic, larson2016we, bolukbasi2016man, buolamwini2018gender, crawford2016artificial}.  This phenomenon was highlighted by a report from the White House on the possible harms that biased classifiers and decisions can have on society as a whole \citep{executive2016big}. As a result of these alarms, researchers increased focus on methods for mitigating the influence that intentional and unintentional biases have on learned classifiers and decisions. Although the approaches vary, they can be generally structured into three classes of approaches, (1) fair preprocessing of data \citep{zemel2013learning} (2) fair post-processing of outputs \citep{hardt2016equality, kleinberg2016inherent} and (3) inherent fairness in models \citep{kusner2017counterfactual, raff2018gradient, zafar2015fairness, donini2018empirical}.

\paragraph{Approaches to Fair Learning} 
Through preprocessing methods such as omitting information that leads to biases \citep{grgic2016case} or by introducing new representations \citep{sattigeri2018fairness, madras2018learning} researchers have attempted to ensure that model inputs, and their resulting outcomes, have varying levels of independence from protected attributes. However \citet{datta2017proxy} show that proxy discriminators often remain in the data, as confounders for protected attributes are still a part of other features that are necessary for a model. Proxies can force a difficult choice of whether or not to leave features strongly correlated with protected attributes in the data or remove features that are necessary for model accuracy. To combat this effect, many researchers have introduced methods of ensuring that models are unable to learn biases as a part of training \citep{raff2018gradient, zhang2018mitigating}, or that the training itself is able to inherently conform to some notion of fair treatment or fair outputs for individuals \citep{manisha2018neural}. 
	
Societally, there is no accepted true notion of fair treatment. As a result, additional work had to be done to constrain the possibility of approaches so that well-meaning individuals are not led to dead-ends and impractical applications \citep{kleinberg2016inherent, pleiss2017fairness}. This area has found that fair constraints cannot generally be combined to make a decision more fair. Some ideas of fair treatment are incompatible with each other except in trivial cases or impossibly specific circumstances.

\paragraph{Impact of Fair Policies} 
The majority of prior work has been in a static setting in which the impact of these decisions are not studied, ensuring only that they conform to some notion of fair treatment. However, there is a growing body of work \citep{liu2018delayed, mouzannar2019fair, hu2018welfare} on the impact of fair policies and fair classifiers on the population. \citet{liu2018delayed} showed that even though many ``fair'' policies may sound beneficial, they can actually harm the population over time. For example, when applying demographic parity \citep{calders2009building,zliobaite2015relation} in a loan setting, if required representation proportition for both groups is too large, the policy will allow many people to receive loans they cannot repay, causing the group financial state to drop. Mouzannar et al., \citep{mouzannar2019fair} focused on a similar setting, and also show a similar yet counterintuitive result: that in some cases, entirely uncontrained policies lead to convergence of the groups.

\paragraph{Our Contributions} 
Here, we focus on the implications of fair policies on a population over time. We build upon work in \citep{liu2018delayed} and analyze the long-term equilibria of policies that enforce different selection rates among advantaged and disadvantaged groups.  However, extending the prior work of \citet{mouzannar2019fair}, which focused on discrete state settings and Bernoulli random variables, we assume a full continuous parameterization of the population success profiles, governed in our case by the Beta distribution.  Under this assumption,  we conveniently find that we can capture fair policies with a simple analytic parameterization of the cumulative distribution function.  We show that unconstrained policies can indeed reinforce inequalities amongst the two groups; however, we show that in this setting, enforcing \emph{any} fairness constraint will cause the two groups to converge in distribution.  However, in constrast to the setting of 
\citet{mouzannar2019fair}, we show that under our model, the effect of misestimations or stereotypes leads to a setting where populations do not converge. 

\section{Problem Setting and Population Distributions}

When assessing whether to grant a loan, the quantity of interest to a lender is the conditional probability of an individual repaying a loan given relevant features such as credit score, income, or repayment history. Under our assumptions, we posit that loan repayment (at least at an observable level) is a \emph{fundamentally} stochastic concept, and so this conditional probability is not merely an estimation artifact but a genuine real-valued property of individuals. 

For this setting, our initial goal is to model the \emph{distribution} of these probabilities over the entire population. There are many possible distributions which model economic well-being of the population, however, given that we are restricted to the $[0,1]$ space of repayment probabilities, we choose to use a standard distribution over this space, the Beta distribution, to model the population; we show in a later section the fits of such distribution to real data.  We want to stress that we are \emph{not} claiming that this model perfectly captures real world effects, but rather that is a simple and intuitive setting where the effects of bias and misestimation can be illustrated.

\begin{assumption}
At time $t$, the distribution of payback probabilities for individuals from group $(i)$, denoted by $\pi_t^{(i)}(x)$, is governed by a Beta distribution
\begin{equation}
\pi^{(i)}_t(x) = \mathbf{Beta}(x; \mu_t, c)
\end{equation}
where $\mathbf{Beta}(x;  \mu_t, c )$ is the density of a Beta distribution in a ``mean-parameterized'' form, such that for standard beta distribution parameters, $a,b$, $\mu_t = \mathbb{E}[x] = \frac{ a }{ a + b }$, and $c$ is a shape parameter, $c = a + b$. (Note that this implies $a = c  \mu_t$ and $b = c(1-\mu_t)$).
\label{assumption:def}
\end{assumption}

In the multi-group setting, we consider two groups that have different means, $\mu_t^{(0)}$ and $\mu_t^{(1)}$, where $\mu_t^{(0)} < \mu_t^{(1)}$ implies that at time $t$, group $0$ is a disadvantaged group and $1$ is an advantaged group who share the same shape $c > 0$.  A bank grants loans to each group according to a threshold $A^{(i)}$ for $i \in \{0,1\}$, where individuals in group $i$ who are above this threshold receive a loan. In general, the threshold will typically be selected by a lender as a function of the current mean, such that $A^{(i)} = \tau(\mu^{(i)} )$ for policy $\tau : [0,1] \rightarrow [0,1]$.  Fair policies will typically specify $\tau$ such that some proportion $s$ (which could be the total selected population, the true positive, etc), will be equalized over the two groups.



An appealing property of the Beta characterization of population state is that several common fair policies have a simple and analytic form, as specified by the following proposition.
\begin{proposition}
Setting a threshold
\begin{equation}
A^{(i)} = \tau( s, \mu^{(i)} ) = I^{-1}_{1 - s}( \mu^{(i)} c + k_1, (1 - \mu^{(i)}) c + k_2)
\end{equation}
in which $I^{-1}$ is the inverse of the regularized incomplete beta function, $s$ describes the equal treatment proportion (proportion in demographic parity, true positive rate in equality of opportunity, etc), with
\begin{align*}
k_1 &> -\min\{c \mu^{(i)}, c \mu^{(j)}\} \\
k_2 &> - (c - \max\{c\ \mu^{(i)}, c\ \mu^{(j)}\}).
\end{align*}
Any such $\tau$ is a fair thresholding policy in which Equality of Opportunity, Demographic Parity, and Blindness are special cases.
\label{prp:policy}
\end{proposition}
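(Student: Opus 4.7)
The plan is to exploit the fact that the regularized incomplete beta function $I_x(a,b)$ is the CDF of a $\text{Beta}(a,b)$ random variable, so $I^{-1}_{1-s}(a,b)$ returns the threshold $A$ satisfying $P(X > A) = s$ for $X \sim \text{Beta}(a,b)$. Under the mean parameterization of Assumption~\ref{assumption:def}, group $i$ actually draws from $\text{Beta}(c\mu^{(i)}, c(1-\mu^{(i)}))$, whereas the proposed $\tau$ evaluates the inverse CDF at the \emph{shifted} parameters $(c\mu^{(i)} + k_1, c(1-\mu^{(i)}) + k_2)$. The whole proposition therefore reduces to the following template: for each fairness criterion, identify a choice of $(k_1, k_2)$ that makes the survival function of the shifted Beta at $A^{(i)}$ coincide with the quantity being equalized across groups, and then confirm that the stated bounds are the weakest conditions needed to keep both shifted shape parameters positive.

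First I would handle Demographic Parity with $k_1 = k_2 = 0$: in that case $A^{(i)}$ is literally the $(1 - s)$-quantile of the true density $\pi^{(i)}$, so $P(X^{(i)} > A^{(i)}) = s$ for each group and the selection rates agree. Next, I would handle Equality of Opportunity with $k_1 = 1,\, k_2 = 0$ via the identity
\begin{equation*}
x\,\mathbf{Beta}(x; a, b) \;=\; \frac{a}{a+b}\,\mathbf{Beta}(x; a+1, b) \;=\; \mu\,\mathbf{Beta}(x; a+1, b),
\end{equation*}
which follows from $B(a+1,b)/B(a,b) = a/(a+b)$. Interpreting $x$ as the repayment probability, so $P(Y=1 \mid X = x) = x$, this identity collapses the TPR integral into a single incomplete beta evaluation:
\begin{equation*}
\mathrm{TPR}^{(i)} \;=\; \frac{1}{\mu^{(i)}}\int_{A^{(i)}}^{1} x\,\pi^{(i)}(x)\,dx \;=\; 1 - I_{A^{(i)}}\bigl(c\mu^{(i)} + 1,\; c(1-\mu^{(i)})\bigr),
\end{equation*}
which equals $s$ at exactly the prescribed $A^{(i)}$. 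Blindness, in which a single threshold is applied regardless of group membership, I would argue as the limiting case where $(k_1, k_2)$ are chosen to absorb the $\mu^{(i)}$ dependence entirely, leaving $A^{(i)}$ as a fixed quantile of a group-independent Beta distribution (and recovering the trivial sub-case when $\mu^{(i)} = \mu^{(j)}$).

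The final step is to verify that the stated bounds on $(k_1, k_2)$ are exactly what is needed for $I^{-1}$ to be well-defined: requiring $c\mu^{(i)} + k_1 > 0$ and $c(1-\mu^{(i)}) + k_2 > 0$ simultaneously for both groups $i$ and $j$ yields precisely $k_1 > -\min\{c\mu^{(i)}, c\mu^{(j)}\}$ and $k_2 > -(c - \max\{c\mu^{(i)}, c\mu^{(j)}\})$. I expect the only nonroutine step to be the $x\,\mathbf{Beta}$ identity that collapses the TPR integral into a single incomplete beta function; once that identity is in place, the three fairness criteria each instantiate the same template and the bounds on $k_1, k_2$ reduce to bookkeeping.
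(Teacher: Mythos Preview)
Your proposal is correct and follows essentially the same route as the paper: the paper's proof also verifies each criterion by reducing the relevant integral to $1 - I_A(c\mu + k_1,\, c(1-\mu) + k_2)$ for the appropriate $(k_1,k_2)$, invoking exactly your identity $x\,\mathbf{Beta}(x;a,b) = \tfrac{a}{a+b}\,\mathbf{Beta}(x;a+1,b)$ for Equality of Opportunity. The one point worth sharpening is Blindness: the paper makes explicit that the ``absorb the $\mu^{(i)}$ dependence'' limit is $k_1 \to \infty$ or $k_2 \to \infty$, so the finite $c\mu^{(i)}$ shifts become negligible and the thresholds coincide.
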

Specifically, $k_1=0, k_2=0$ in proposition \ref{prp:policy} implies demographic parity, $k_1=1, k_2=0$ implies equality of opportunity, $k_1$ or $k_2 \rightarrow \infty$ implies blindness, and the intersection of equality of opportunity and $k_1=0, k_2=1$ implies equalized odds. We provide a derivation of this proposition in Appendix \ref{proof:prp:policy}.

For the remainder of this paper, the function, $\tau$ denotes a thresholding policy as defined by the above proposition. We claim that all $\tau$ enforce some form of fair treatment, as this set comprises non-decreasing, right-continuous functions of group repayment probabilities, where any group with a lower average repayment probability receives a lower acceptance threshold. In other words, under this set of policies, a disadvantaged group will never have more strict requirements for a loan application than an advantaged group.

It should be noted that this proposition is essentially a specialization of \citep[Section 6]{liu2018delayed}, which shows that for a general class of distributions, many fair policies can be expressed by terms involving the inverse cumulative distribution function; the appealing element to the Beta distribution setting is that all of these policies have a simple analytic form governed by the two parameters mentioned above.



\section{Dynamics}

We now introduce our model of population evolution under loan-granting policies.   In general, any Markovian evolution of the population distribution $\pi$ (for now we drop the group superscripts for notational clarity) will evolve over time according to 
\begin{equation}
\pi_{t+1}( x_{t+1} ) = \int \pi_{t}( x_t) F( x_{t}, x_{t+1} ) dx_{t},
\end{equation}
where the transition kernel $F : [0,1] \times [0,1] \rightarrow \mathbb{R}_+$ governs the evolution of the state. This is essentially the simple continuous-time version of state evolution considered e.g., in \citet{mouzannar2019fair}.  Although fully general, this form is not particularly useful in practice, because most easily-parameterized transition kernels would lead to subsequent states no longer characterized as a Beta distribution.  For this reason, we forgo the general transition kernel dynamics, and concentrate on a special case where \emph{the dynamics are governed by a simple update to the distribution parameters directly} (we later relate this to a specific form of transition kernel).


The hypothesis that underlies our model is that granting loans (when they are repaid) has the potential to produce positive upward mobility for a population, while granting loans that are not repaid \emph{or} failing to grant loans at all can produce downward mobility. In more detail: we treat a loan being granted as a net benefit for its recipient, similarly to real-world influence in which small business loans open up the opportunity for financial growth through personal enterprises and mortgage loans allow upward social mobility through better neighborhoods, schools, and housing investments. Conversely not receiving a loan, or receiving a loan and failing to pay it back, has the potential for stagnation and decline in opportunities, education, and investments.


Specifically, given the discussion above, we formalize our dynamics model according to the update to the mean parameterization of the distribution over time
\begin{assumption}
The dynamics of an arbitrary group distribution $\pi_t$ are governed by the following update of the mean $\mu_{t+1}$ (with $c$ remaining constant over time)
\begin{equation}
\mu_{t+1} = f( A, \mu_t) = \beta \cdot p_+( A, \mu_{t}) \cdot \mu_+( A, \mu_{t}) + 
	\nu \cdot ( 1 -  p_+( A, \mu_{t}))
\end{equation}
where $\nu \in [0,1]$, $\beta \in [0,1]$ are free parameters, $p_+$ denotes the proportion of the population above the threshold $A$ that is chosen by the lender and $\mu_+$ are the estimated parameters of the current repayment pattern; in this single parameter model, $\mu_+$ is the expected repayment probability of the distribution for those who receive a loan
\begin{equation}
p_+(A, \mu_t) = \int_A^1 \pi_t( x ) dx, \;\; \mu_+(A, \mu_t) = \frac{ \int_A^1 x\ \pi_t( x ) dx }{ p_+( A, \mu_t ) }.
\end{equation}
\label{assump:dynamics}
\end{assumption}

This update function, $f$, captures the following intuition: First, for the proportion of the population that receives the loan $p_+(A,\mu_t)$, the parameter that governs the next state, $\mu_{t+1}$, will move towards the mean of the population selected for a loan $\mu_+(A, \mu_t)$, scaled by the parameter $\beta$. Setting $\beta < 1$ captures the fact that there is an asymmetry between the harm of failing to repay and the benefit of repayment, with the consequences of failing to repay typically seen as \emph{more harmful} than the benefits of repaying.  These dynamics describe the positive or negative feedback effects of the community, where successfully repaid loans can improve the population, but failure to repay loans can damage the population.  Second, for the proportion of the population that does \emph{not} receive a loan, $1-p_+(A,\mu_t)$, the population will shift toward the nominal mean $\nu$; this captures the negative effects of being denied access to credit.

We want to emphasize that, like the choice of Beta distribution to begin with, we are not arguing that this model necessarily fits any particular data set.  Rather, we seek a general and tractable model that intuitively characterizes the effects of giving or not giving loans to a population, where we can analyze the effects of fair policies.  

Under the general transition kernel formulation above, this model is essentially positing the transition kernel
\begin{equation}
\begin{split}
\pi_{t+1}( x_{t+1} ) & = \int \pi_{t}( x_t) F( x_{t}, x_{t+1} ) dx_{t} \\ & = \int \pi_{t}( x_t) \frac{\mathbf{Beta}(x_{t+1};\mu_{t+1},c)}{\mathbf{Beta}(x_t;\mu_t,c)} dx_{t}.
\end{split}
\end{equation}

We believe that this transition kernel is a necessary assumption; if we choose a transition kernel not based on the beta distribution as above, we would break our assumption \ref{assumption:def} after a single time step. In order to model dynamics of a specific probability distribution,this requirement naturally comes about. 

%
%

While this is obviously a very specialized setting, we argue that it captures many intuitive properties of actual lending environment, and it is thus illustrative to consider the impact of fair policies and (mis)estimation in this setting.

\paragraph{Understanding Fair Policies} Under this model, we want to determine whether or not a fair policy is equalizing over time: does it move groups that follow the \emph{same} dynamics (i.e., the same $\beta$ and $\nu$ terms) but \emph{different} initial conditions towards the same distribution?   Should a policy equalize populations, following \cite{hu2018short}, the fair class of ``group-aware'' policies can be considered as a short-term intervention allowing for `group blind'' long-term policies on the population as a whole. We formalize a policy that leads to equilibria as follows.
\begin{definition}
A thresholding policy, $\tau( \mu )$, is said to be asymptotically equalizing over two groups if for dynamics 
\begin{equation}
\mu^{(i)}_{t+1} = f(\tau(\mu^{(i)}_{t}), \mu^{(i)}_{t}), \;\; i \in \{ 0,1 \}
\end{equation}
then the means converge in the limit regardless of their initial values
\begin{equation}
\lim_{\substack{t \rightarrow \infty} } |\ \mu_{t}^{(0)} - \mu_{t}^{(1)} | = 0.
\end{equation} 
\label{Asy:EQ}
\end{definition}

\subsection{The Equilibrium Points of Fixed Policies}
Our first class of results shows that for any \emph{fixed} policy (that is, $\tau(\mu_t) = A_0, \forall \mu_t$), there is a unique equilibrium point $\mu_\infty$.  Such a policy may be beneficial to both groups, beneficial to one and harmful to the other, or harmful to both (here beneficial means that the group ends at a mean greater than their original mean, $\mu_\infty > \mu_0$, and harmful means the opposite).

We begin by considering $\nu$ and $\beta$ from assumption \ref{assumption:def}, and observing how these free parameters describe the equilibrium of certain specific, fixed policies.  By fixing $\tau( \mu ) = 1$, we deny all individuals from receiving loans, thus pushing the population mean to $\nu$.  Additionally, by fixing $\tau( \mu ) = 0$, we accept all individuals; this policy pushes the mean $\mu_{t+1}$ to $\beta \mu_t$, which approaches $0$ in the limit.  Thus, these two fixed policies each have an equilibrium, where 
\begin{equation}
f( A, \mu_\infty ) = \mu_\infty,
\label{eq:muinfty}
\end{equation} 
that attracts every $\mu$ toward $\mu_\infty$. For other values of $A$, by solving equation \ref{eq:muinfty}, we find that there exists an attracting $\mu_\infty$ equilibrium for \emph{every} threshold $A$ that satisfies definition \ref{Asy:EQ}. In addition, we show in Appendix \ref{proof:prop:EQ} that these stable equilibria are unique for all thresholds, $A$, that come from a fixed policy.

\begin{theorem}
Under the aforementioned dynamics, every fixed threshold $A_0$ has a single unique equilibrium point.  Furthermore, this equilibrium point is stable, in that the iteration $\mu_{t+1} = f(A_0,\mu_{t})$ will converge in the limit to this equilibrium point.
\label{prop:Eq}  
\end{theorem}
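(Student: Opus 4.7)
My plan is to treat the fixed-threshold dynamics as a one-dimensional iteration of $g(\mu):=f(A_0,\mu)$ on $[0,1]$ and split the claim into existence, uniqueness, and global convergence; I will assume the generic case $\beta,\nu\in(0,1)$, treating the degenerate values $\beta,\nu\in\{0,1\}$ and $A_0\in\{0,1\}$ directly (some are already handled in the paragraph preceding the theorem). For existence I would first rewrite
\begin{equation*}
g(\mu)=\beta\int_{A_0}^1 x\,\pi_\mu(x)\,dx+\nu\int_0^{A_0}\pi_\mu(x)\,dx,\qquad \pi_\mu:=\mathbf{Beta}(\cdot;\mu,c),
\end{equation*}
so that continuity of $g$ on $[0,1]$ follows from continuity of the Beta density in its mean parameter. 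The boundary values come out cleanly: as $\mu\to 0^+$, $\pi_\mu$ concentrates at $0$, giving $g(0)=\nu>0$ for any $A_0>0$; as $\mu\to 1^-$, $\pi_\mu$ concentrates at $1$, giving $g(1)=\beta<1$ for any $A_0<1$. Thus $h(\mu):=g(\mu)-\mu$ satisfies $h(0)>0>h(1)$ and the intermediate value theorem produces a fixed point $\mu_\infty\in(0,1)$.

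For uniqueness and global convergence together, I would show that $h$ is strictly decreasing on $[0,1]$, equivalently $g'(\mu)<1$. Differentiating under the integral and using $\int_0^1\partial_\mu\pi_\mu\,dx=0$ gives $g'(\mu)=\int_{A_0}^1(\beta x-\nu)\,\partial_\mu\pi_\mu(x)\,dx$. The mean-parameterized Beta is the one-parameter exponential family $\pi_\mu(x)\propto x^{c\mu-1}(1-x)^{c(1-\mu)-1}$ with natural statistic $T(x)=\log(x/(1-x))$ and score $\partial_\mu\log\pi_\mu(x)=c(T(x)-\mathbb{E}_\mu[T])$, so $\{\pi_\mu\}$ has strict monotone likelihood ratio in $\mu$. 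This lets me rewrite $g'(\mu)$ as a weighted covariance under $\pi_\mu$ and bound it strictly below $1$ using $\max(\beta,\nu)<1$ together with the zero-mean property of the score and the nondegeneracy of $\beta x-\nu$ on $[A_0,1]$. Strict monotonicity of $h$ gives uniqueness of $\mu_\infty$, and combined with monotonicity of $g$ itself (which follows from the same MLR analysis) and the sign structure $h>0$ on $[0,\mu_\infty)$, $h<0$ on $(\mu_\infty,1]$, the iterates $\mu_t$ form a monotone bounded sequence converging to $\mu_\infty$.

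The hard part will be the strict bound $g'(\mu)<1$: a naive estimate $|\beta x-\nu|\le\max(\beta,\nu)\le 1$ picks up an extra factor from the variance of the sufficient statistic $T$ and is not sharp enough on its own, so the argument genuinely has to exploit both the zero-mean property of the score $\partial_\mu\log\pi_\mu$ and the nondegeneracy of $\beta x-\nu$ on $[A_0,1]$. I expect the covariance/MLR reformulation above to be the cleanest route, with an alternative being to establish a direct contraction estimate $|g(\mu_1)-g(\mu_2)|\le L|\mu_1-\mu_2|$ with $L<1$ by exploiting stochastic dominance of $\pi_{\mu_2}$ over $\pi_{\mu_1}$ whenever $\mu_1<\mu_2$ in the Beta family.
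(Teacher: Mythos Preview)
Your existence argument is correct and is exactly what the paper does: evaluate $h(\mu)=f(A_0,\mu)-\mu$ at the endpoints, get $h(0)=\nu>0$ and $h(1)=\beta-1<0$, and invoke the intermediate value theorem to obtain a stable crossing.

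Where your proposal diverges from the paper is on uniqueness and global convergence, and here there is a genuine gap. Your plan hinges on the inequality $g'(\mu)<1$ for all $\mu$, but this fails in general. Writing $g(\mu)=\mathbb{E}_{\mu}[\phi(X)]$ with $\phi(x)=\beta x\,\mathbf{1}\{x>A_0\}+\nu\,\mathbf{1}\{x\le A_0\}$, one has $g'(\mu)=c\,\mathrm{Cov}_{\mu}(\phi(X),T(X))$ with $T(x)=\log\frac{x}{1-x}$, and since $c\,\mathrm{Cov}_{\mu}(X,T(X))=1$ this gives $g'(\mu)=\beta+c\,\mathrm{Cov}_{\mu}(\phi(X)-\beta X,\,T(X))$. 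The residual $\phi(x)-\beta x$ is supported on $[0,A_0]$ and is not sign--definite when $A_0>\nu/\beta$, so there is no reason for the covariance correction to be nonpositive. Concretely, for large $c$ the Beta law concentrates with scale $\sigma\sim c^{-1/2}$, and a normal approximation at $\mu=A_0$ gives
\[
g'(A_0)\;\approx\;\frac{\beta}{2}\;+\;\frac{\beta A_0-\nu}{\sigma\sqrt{2\pi}}\;\sim\;(\beta A_0-\nu)\sqrt{c},
\]
which exceeds $1$ whenever $A_0>\nu/\beta$ and $c$ is moderately large (e.g.\ $c=100$, $A_0=0.5$, $\beta=0.99$, $\nu=0.1$ already gives $g'(A_0)\approx 3.7$). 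Hence $h$ is not strictly decreasing, and your proposed contraction alternative $|g(\mu_1)-g(\mu_2)|\le L|\mu_1-\mu_2|$ with $L<1$ fails for the same reason. Your auxiliary claim that $g$ itself is monotone ``by the same MLR analysis'' also needs $\phi$ to be nondecreasing, which requires $\beta A_0\ge\nu$; for $A_0<\nu/\beta$ the function $\phi$ has a downward jump at $A_0$ and the MLR argument does not apply directly.

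For comparison, the paper does \emph{not} prove uniqueness analytically at all: after the endpoint argument it simply states that uniqueness is ``difficult to show analytically'' and verifies it by a numerical sweep over the four free parameters $(A_0,c,\beta,\nu)$, arguing that the grid is low--dimensional and that large $c$ can be handled by the point--mass limit. So your proposal is strictly more ambitious than what the paper actually establishes; unfortunately the specific mechanism you propose ($g'<1$ everywhere) is not the right one. If you want an analytical route, you would need an argument that tolerates $g'>1$ on an interval where $h$ is already negative (which is what actually happens in the counterexample above), rather than a global derivative bound.
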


In general, there is no closed form expression for the fixed point $\mu_\infty$, but it can easily be computed numerically by simply finding a root of the one-dimensional nonlinear function \eqref{eq:muinfty}.

Given the existence of an equilibrium point of any fixed threshold, for two initial group means $\mu_0^{(0)}$ and $\mu_0^{(1)}$, we can characterize certain fixed threshold policies as either beneficial to both groups, beneficial to one, or harmful both, depending on the relation of $\mu_\infty$ to $\mu_0^{(0)}$ and $\mu_0^{(1)}$.
\begin{definition}
The curve of fixed points for two groups is separated to 3 partitions, \\
1) A positive equilibria is a fixed point according to Definition \ref{Asy:EQ}, where for $\mu_t^{(i)}$ and $\mu_t^{(j)}$ at time $t=0$, $\mu_{\infty} \geq \max\{ \mu_0^{(i)}, \mu_0^{(j)}\}$. \\
2) A negative equilibria is a fixed point  where for $\mu_t^{(i)}$ and $\mu_t^{(j)}$, $\mu_{\infty} \leq \min\{ \mu_0^{(i)}, \mu_0^{(j)}\}$. \\
3) A mixed equilibria is a fixed point, where for $\mu_t^{(i)}$ and $\mu_t^{(j)}$, $\mu_0^{(i)} < \mu_{\infty} < \mu_0^{(j)}$.
\label{def:EQcat}
\end{definition}

Finally, we can characterize the ``maximum social welfare'' policy as the choice of $\tau(\mu_t) = A_0$ that achieves a maximal value of $\mu_\infty$ (and which thus must be beneficial to both groups).  This is characterized by the following proposition.
\begin{proposition}
Under a fixed policy $\tau( \mu ) = A_0, \forall \mu$, the policy that leads to the maximum equilibrium point is a fixed threshold at $\tau( \mu ) = \frac{\nu}{\beta}$.
\label{prp:socialwelfare}
\end{proposition}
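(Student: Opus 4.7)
My plan is to rewrite the update rule in a form that exposes $\nu/\beta$ as a natural pointwise optimum, and then lift that pointwise statement to the equilibrium using the uniqueness established in Theorem~\ref{prop:Eq}.

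First, I would combine the two terms in $f(A,\mu)$ by writing
\begin{equation*}
f(A,\mu) \;=\; \beta\!\int_A^1 x\,\pi(x;\mu)\,dx \;+\; \nu\!\left(1 - \int_A^1 \pi(x;\mu)\,dx\right) \;=\; \nu \;+\; \int_A^1 (\beta x - \nu)\,\pi(x;\mu)\,dx.
\end{equation*}
The integrand $(\beta x - \nu)\pi(x;\mu)$ is nonnegative precisely when $x \geq \nu/\beta$, so for any fixed $\mu$ the choice $A = \nu/\beta$ pointwise maximizes $f(A,\mu)$ over all thresholds $A \in [0,1]$. More precisely, for any alternative $A'$, the difference $f(\nu/\beta,\mu) - f(A',\mu)$ equals either $\int_{\nu/\beta}^{A'}(\nu - \beta x)\pi(x;\mu)\,dx$ (if $A' > \nu/\beta$) or $\int_{A'}^{\nu/\beta}(\nu - \beta x)\pi(x;\mu)\,dx$ (if $A' < \nu/\beta$), both of which are nonnegative. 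Hence $f(\nu/\beta,\mu) \geq f(A',\mu)$ for every $\mu$ and every $A'$.

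Next I would transfer this pointwise dominance to the equilibria. Let $\mu^{\star}_{\infty}$ be the fixed point of $f(\nu/\beta,\cdot)$ and let $\mu^{A'}_{\infty}$ be the fixed point of $f(A',\cdot)$, both guaranteed to exist and be unique by Theorem~\ref{prop:Eq}. Applying the pointwise inequality at $\mu = \mu^{\star}_{\infty}$ gives
\begin{equation*}
f(A',\mu^{\star}_{\infty}) \;\leq\; f(\nu/\beta,\mu^{\star}_{\infty}) \;=\; \mu^{\star}_{\infty},
\end{equation*}
so the function $g(\mu) := f(A',\mu) - \mu$ is nonpositive at $\mu^{\star}_{\infty}$. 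At the other end, $g(0) = f(A',0) = \nu \geq 0$ because the Beta family with mean $0$ places unit mass at the origin so $p_+(A',0) = 0$ for $A' > 0$ (the boundary case $A' = 0$ is handled directly from the formula). Continuity of $g$ and the intermediate value theorem then produce a root in $[0,\mu^{\star}_{\infty}]$, and by the uniqueness part of Theorem~\ref{prop:Eq} that root must be $\mu^{A'}_{\infty}$, giving $\mu^{A'}_{\infty} \leq \mu^{\star}_{\infty}$.

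The main obstacle I anticipate is the second step: pointwise dominance of $f$ in $\mu$ does not automatically translate into dominance of fixed points for arbitrary dynamical systems, so I need Theorem~\ref{prop:Eq} together with a sign-comparison at the boundary $\mu=0$ to pin down the correct side on which $\mu^{A'}_{\infty}$ lies. The rest of the argument is a clean calculus computation, and the only subtlety beyond that is checking the degenerate boundary cases $A_0 \in \{0,1\}$ (where $p_+$ or $\mu_+$ must be interpreted carefully), which can be absorbed by treating $\nu/\beta \wedge 1$ rather than $\nu/\beta$ when $\nu > \beta$.
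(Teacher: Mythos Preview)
Your proposal is correct and in fact more complete than the paper's own argument. Both you and the paper establish the same pointwise fact---that $A=\nu/\beta$ maximizes $f(A,\mu)$ for every fixed $\mu$---though by slightly different routes: the paper differentiates $f$ in $A$ to obtain $\partial f/\partial A = -\pi(A;\mu)(A\beta-\nu)$ and reads off the unique critical point, while you rewrite $f(A,\mu)=\nu+\int_A^1(\beta x-\nu)\pi(x;\mu)\,dx$ and observe that the integrand changes sign exactly at $x=\nu/\beta$. These are equivalent one-line computations. Where you genuinely go further is the second step: the paper's proof stops at the one-step statement (``at any time $t$, the threshold $A=\nu/\beta$ results in the greatest benefit'') and never explicitly argues why pointwise maximization of $f(A,\cdot)$ forces the \emph{equilibrium} $\mu_\infty$ to be maximal. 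You close that gap by evaluating $g(\mu)=f(A',\mu)-\mu$ at $\mu=0$ (where $g=\nu>0$, as in the proof of Theorem~\ref{prop:Eq}) and at $\mu=\mu^\star_\infty$ (where $g\le 0$ by pointwise dominance), then invoking continuity and the uniqueness from Theorem~\ref{prop:Eq} to trap $\mu^{A'}_\infty\le\mu^\star_\infty$. That extra step is exactly what the statement of the proposition calls for, and it makes your argument strictly tighter than the paper's.
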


\begin{figure}
	\includegraphics[scale=0.7]{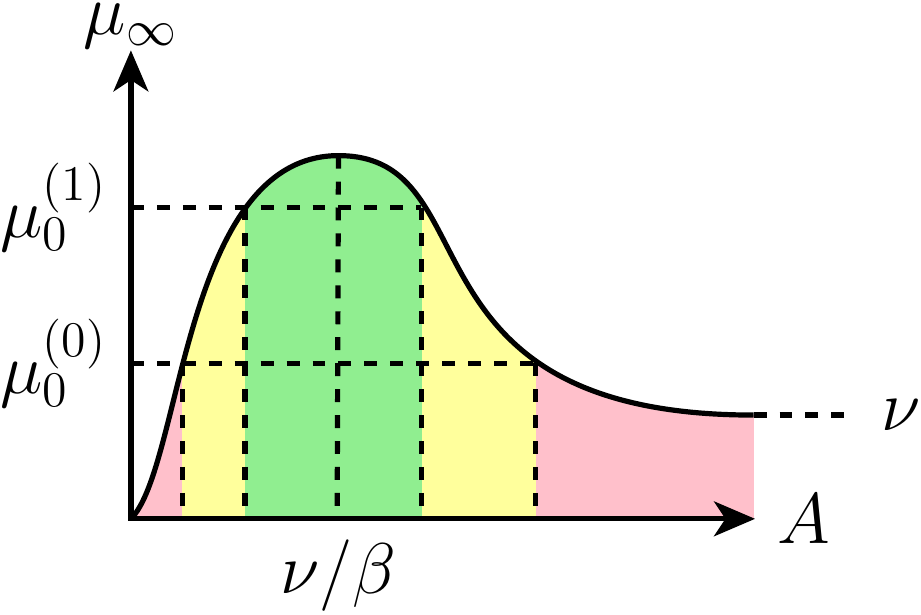}
	\caption{ We show an example of this $\mu_{\infty}$ equilibrium curve where each $A$ has a unique equilibrium point that attract $\mu$. For advantaged $\mu_0^{(i)}$ and disadvantaged $\mu_0^{(j)}$, the green, yellow, and red section shows thresholds that lead to positive, mixed, and negative equilbria respectively.}
	\label{figure:EquilibriumCurve}
\end{figure}

Putting these points together, we can view the possible outcomes for different fixed policies in relation to the group initial states, shown in Figure \ref{figure:EquilibriumCurve}.  This curve roughly parallels the outcome curve from \cite{liu2018delayed}, but with the important distinction that it refers to \emph{infinite horizon} group outcomes rather than single-step outcomes.

\subsection{Institution Rewards and Optimal Policies}

Although the existence of fixed points in the dynamics for fixed policies is an interesting feature of the model, in most cases a lender would want to decide which threshold to use (or even change threshold dynamically), so as to maximize some utility over time.  In this section, we therefore introduce an institution utility, and characterize the optimal policies that result from the standard Bellman equations for this setting.  The first main takeaway from this section is that, under the dynamics model we introduced above, if a lender is allowed to follow an optimal, unconstrained policy for each group independently, then this may lead to the Matthew effect, given as a bifurcation where groups with an initial mean state $\mu_0$ over some threshold will see their means increase, while groups with an initial state below this threshold will see their means decrease.  However, the second takeaway from this section is that if we further restrict the class of policies to jointly obey \emph{any} fairness constraint defined by proposition \ref{prp:policy} over the two groups, then the means of the two groups will always converge.

\paragraph{Institution Utility}
In the loan-granting setting, if a bank loans one unit of money to an individual, the bank expects some interest $R$ on their investment, while if an individual cannot repay their loan, then the bank loses this unit of money. We thus introduce a reward function that captures this specific utility, $g$, for the bank as,  
\begin{equation}
	g( A, \mu ) = p_+( A, \mu) (( 1 + R ) \mu_+(A,\mu)  - 1))
\end{equation}

The reward function models the intuition above, where a bank seeks to maximize its utility by ensuring the largest fraction of the population receives a loan that generates profit for the bank. Typically, the bank seeks to maximize its utility by selecting an interest, $R$, that is not too high, which would force people to fall into delinquency, but not too low, which unnecessarily decreases their profit.

\paragraph{Optimal Policies via the Bellman Equation}
We can characterize the policies which maximize the discounted infinite horizon return via the standard Bellman equation
\begin{equation}
J_\gamma^\star(\mu) = \max_{A} \left \{g(A, \mu) + \gamma J_\gamma^\star(f(A, \mu)) \right \}
\end{equation}
where $J^\star_{\gamma}(\mu)$ denotes the optimal value function (the expected sum of discounted reward under the optimal policy), and $\gamma \in [0,1)$ denotes a discount factor that trades off between immediate and future reward.  We also use the notation $\tau^\star_\gamma(\mu)$ to characterize the optimal policy, which is simply the threshold $A$ that achieves this maximum. 

As with the fixed point itself, there is no closed form expression for the optimal value function or policy.  However, because the quantity only involves a two-dimensional function over $A$ and $\mu$, we can easily compute the optimal function numerically via spline approximation and dynamic programming.  Furthermore, we can easily characterize the 
one-step greedy policy as, $\gamma = 0$.

\begin{proposition}
 The one-step greedy policy for $\gamma = 0$ is given by the fixed policy
 \begin{equation}
\tau^\star_0(\mu) = \frac{1}{1+R}.
\end{equation}
\label{prop:gamma}
\end{proposition}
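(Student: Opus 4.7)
The plan is to reduce the Bellman optimality condition with $\gamma = 0$ to the one-step maximization $\tau_0^\star(\mu) = \arg\max_A g(A,\mu)$ and then to solve this maximization in closed form by rewriting $g$ as a single integral over the accepted region.

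First I would combine the factors in $g(A,\mu) = p_+(A,\mu)\bigl((1+R)\mu_+(A,\mu) - 1\bigr)$. Using the defining expressions for $p_+$ and $\mu_+$, the product $p_+(A,\mu)\,\mu_+(A,\mu)$ equals $\int_A^1 x\,\pi(x)\,dx$, so
\begin{equation}
g(A,\mu) \;=\; \int_A^1 \bigl((1+R)x - 1\bigr)\,\pi(x)\,dx.
\end{equation}
This form already exposes the optimal behavior: the integrand is the expected per-individual profit of lending to an individual with repayment probability $x$, and only individuals with $(1+R)x \geq 1$ contribute positive reward.

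Next I would differentiate in $A$ via the Leibniz rule to obtain
\begin{equation}
\frac{\partial g}{\partial A}(A,\mu) \;=\; -\bigl((1+R)A - 1\bigr)\,\pi(A),
\end{equation}
and set it to zero. Since $\pi(A) > 0$ on the interior of $[0,1]$ under the Beta distribution of Assumption~\ref{assumption:def}, the unique stationary point is $A^\star = \tfrac{1}{1+R}$. To confirm this is a maximum I would observe that $\partial g/\partial A > 0$ for $A < \tfrac{1}{1+R}$ (the integrand being removed is negative, so the integral grows) and $\partial g/\partial A < 0$ for $A > \tfrac{1}{1+R}$ (the integrand being removed is positive), so $g(\cdot,\mu)$ is strictly quasi-concave with a unique maximum at $A^\star$. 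Finally, because $A^\star = \tfrac{1}{1+R}$ does not depend on $\mu$, the resulting greedy policy is a fixed threshold, yielding $\tau_0^\star(\mu) = \tfrac{1}{1+R}$ as claimed.

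The main (and essentially only) obstacle is the bookkeeping of the Leibniz differentiation and the small verification that $A^\star$ lies in the interior $[0,1]$ so the first-order condition is binding rather than attained at a boundary; this follows immediately from $R \geq 0$. Everything else is a direct consequence of the fact that for $\gamma = 0$ the Bellman recursion collapses to a one-shot maximization of $g$, which is linear in $\pi$ and therefore maximized by accepting exactly the set $\{x : (1+R)x \geq 1\}$.
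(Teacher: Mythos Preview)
Your proposal is correct and follows essentially the same route as the paper: reduce the $\gamma=0$ Bellman problem to maximizing $g(A,\mu)$, differentiate in $A$, locate the unique interior critical point $A^\star=\tfrac{1}{1+R}$, and verify via the sign of the derivative that it is a maximum independent of $\mu$. The only cosmetic difference is that you first rewrite $g$ as the single integral $\int_A^1((1+R)x-1)\pi(x)\,dx$ and apply Leibniz's rule, whereas the paper expands the Beta density explicitly before differentiating; the resulting first-order condition and sign analysis are identical.
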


\paragraph{Bifurcations Under Unconstrained Optimal Policies}

This policy is unique as it does not depend on the state, $\mu$, and thus, when applied to two groups with different initial states, $\mu_0^{(0)}$ and $\mu_0^{(1)}$, would lead both to converge to the same mean. However, as time passes ($0 < \gamma$), the resulting policy is \emph{not} fixed, and in practice can lead to a bifurcation of the final state depending on the initial state of the different group populations. This point is highlighted by the following example.

\begin{figure}
	\includegraphics[scale=0.5]{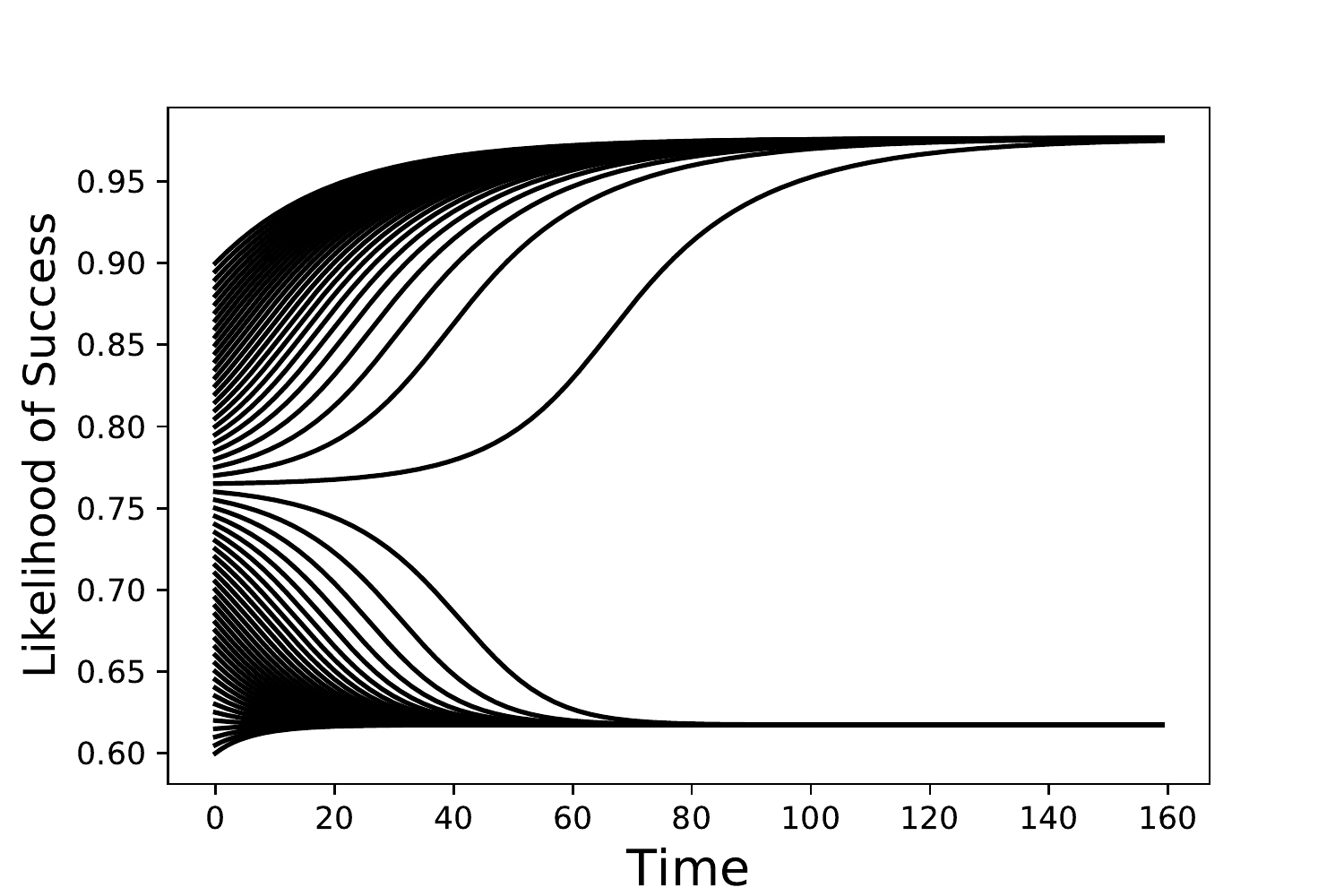}
	\caption{An illustration of the Matthew Effect, wherein a bifurcation of population well-being leads to individuals or groups who benefit from a system continuing to benefit and individuals or groups who struggle are unable to reap the same rewards.}
	\label{fig:matthew}
\end{figure}

\begin{example}
Consider the case of the dynamics specified above with $c = 1.6$, $\nu = 0.2$, $\beta = 0.99$, institution reward given by $R = 0.25$ and discount, $\gamma = 0.6$.  Then the all initial means with $\mu_0 \geq 0.76$ converge to a final mean of $\mu_\infty = 0.976$ while all initial means with $\mu_0 < 0.76$ converge to a final mean of $\mu_\infty = 0.617$.  The illustration of this effect is shown in Figure \ref{fig:matthew}.  Thus, for two groups with $\mu_0^{(0)}$ below 0.76 and $\mu_0^{(1)} > 0.76$ the means of the two groups will never converge.
\end{example}

This bifurcation results in the colloquially-named, ``Matthew Effect" where the advantaged group improves their state, while the disadvantaged group is never able to reach the same level of growth.  This effect was originally coined as a result of well-known scientists receiving a disproportionate level of credit in their collaborations or through the independent discoveries of multiple researchers. The early advantages of one individual or group will often lead to a multiplicative effect where they receive a greater return in their future endeavors, while those without this early advantage will require much more time or support to be able to reach the same heights \citep{crystal1990cumulative, pacheco2008political, willson2007cumulative}.

For an institution to achieve the optimal reward for sufficiently high discount factor, $\gamma$, the model causes the bank to operate at a loss so as to improve the financial state of the largest proportion of people. As this proportion's state improves, the bank recoups its losses returning a greater utility than either the utility maximizing policy or the social good policy. However, for a given time horizon, if the bank cannot recoup its losses by improving the state of a group sufficiently, the policy disregards this group and allows them to settle at different equilibria.

\paragraph{Convergence of the Optimal Policy Under Fairness Constraints}
A key feature in the above setting is that the lender was free to impose the same ``optimal'' policy on the state of each of the two groups independently, with no fairness constraint.  This in fact leads to a situation where an individual from disadvantaged group was required to have a much \emph{higher} probability of repayment in order to receive a loan than an individual in the advantaged group.  In contrast, if we require that the joint policy over both groups obey any fairness constraint defined by proposition \ref{prp:policy}, then this situation cannot occur.  Indeed, as we show in this section, if the joint policy (note that this can be \emph{any} policy, not just the optimal one) is constrained to obey \emph{any} fairness constraint as defined in Proposition \ref{prp:policy}, then under these same dynamics both groups will converge to the same fixed point.

Before providing a theorem on the equilibria of fair policies, we detail the implication that institutional rewards have on the allowable policy . 
\begin{lemma}
For all rewards, $R \leq \frac{\beta}{\nu} - 1$, the optimal control model is never incentivized to set the threshold less than $\frac{\nu}{\beta}$.  
\label{lma:reward}
\end{lemma}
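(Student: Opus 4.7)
The plan is to show that the Bellman right-hand side $g(A,\mu) + \gamma J^\star_\gamma(f(A,\mu))$ is non-decreasing in $A$ on the interval $[0, \nu/\beta]$, from which it follows immediately that the argmax cannot lie below $\nu/\beta$. This reduces to two monotonicity ingredients: monotonicity of $g$ and $f$ in the threshold $A$ on that interval, and monotonicity of the optimal value $J^\star_\gamma$ in the mean $\mu$.

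First I would differentiate $g$ and $f$ in $A$ using the Leibniz rule, noting that $\partial p_+/\partial A = -\pi_t(A)$ and $\partial(p_+\mu_+)/\partial A = -A\pi_t(A)$. Routine manipulation yields
\begin{equation}
\frac{\partial g}{\partial A} = \pi_t(A)\bigl(1 - (1+R)A\bigr), \qquad \frac{\partial f}{\partial A} = \pi_t(A)\bigl(\nu - \beta A\bigr).
\end{equation}
The hypothesis $R \leq \beta/\nu - 1$ is equivalent to $1/(1+R) \geq \nu/\beta$, so on $[0,\nu/\beta]$ both factors $1-(1+R)A$ and $\nu - \beta A$ are non-negative. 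Hence both the one-step reward and the next-state mean are non-decreasing in $A$ throughout $[0,\nu/\beta]$, and strictly increasing wherever $\pi_t(A) > 0$.

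Next I would establish that $J^\star_\gamma(\mu)$ is non-decreasing in $\mu$. The key observation is that the family $\mathbf{Beta}(\cdot;\mu,c)$ with fixed precision $c$ has a monotone likelihood ratio in $\mu$: for $\mu_2 > \mu_1$ the ratio of densities is proportional to $\bigl(x/(1-x)\bigr)^{c(\mu_2-\mu_1)}$, which is non-decreasing in $x$. This gives first-order stochastic dominance, so for any non-decreasing integrand the expectation is non-decreasing in $\mu$. Applying this to the integrands defining $g(A,\mu)$ and $f(A,\mu)$ shows both are non-decreasing in $\mu$ for every fixed $A$. Then the usual value-iteration argument, starting from $J_0 \equiv 0$, carries monotonicity through the Bellman operator: if $J_n$ is non-decreasing in $\mu$, then for each $A$ the map $\mu \mapsto g(A,\mu) + \gamma J_n(f(A,\mu))$ is non-decreasing, and so is the pointwise supremum over $A$. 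Passing to the limit gives the claim for $J^\star_\gamma$.

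Finally I would combine the two pieces: for any $\mu$ and any $A < \nu/\beta$, moving $A$ upward toward $\nu/\beta$ weakly increases $g(A,\mu)$ and weakly increases $f(A,\mu)$; monotonicity of $J^\star_\gamma$ then makes $\gamma J^\star_\gamma(f(A,\mu))$ weakly increase as well, so the entire Bellman objective is non-decreasing on $[0,\nu/\beta]$. Thus no threshold strictly below $\nu/\beta$ can be a strict maximizer, which is the statement of the lemma. The main obstacle is the monotonicity of $J^\star_\gamma$ in $\mu$; everything else is a short calculation, but that step requires the likelihood-ratio property of the constant-precision Beta family together with a standard but careful value-iteration argument to transfer the monotonicity of $g$ and $f$ in $\mu$ to the fixed point of the Bellman operator.
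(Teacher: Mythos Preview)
Your derivative computations for $\partial g/\partial A$ and $\partial f/\partial A$, and the conclusion that both are non-negative on $[0,\nu/\beta]$ under the hypothesis $R\le \beta/\nu-1$, are exactly what the paper's own proof does (the paper phrases the $g$ part as ``the integrand $\pi(x)((1+R)x-1)$ is negative for $x<1/(1+R)\ge\nu/\beta$'', which is the same statement). The paper then stops with the informal sentence that ``any policy chosen will be a trade-off of the state maximization and reward maximization \ldots and in either case, the policy has no incentive to select a threshold less than $\nu/\beta$.'' You correctly recognize that a rigorous link to the Bellman argmax requires $J^\star_\gamma$ to be non-decreasing in $\mu$, and you attempt to supply that; in this sense your outline is more complete than the paper's.

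The gap is in that added step. First-order stochastic dominance transfers only \emph{monotone} integrands, and the integrands defining $g$ and $f$ are not monotone in $x$ for small $A$. Writing $f(A,\mu)=E_\mu[\phi_A(X)]$ with $\phi_A(x)=\beta x\,\mathbf{1}\{x\ge A\}+\nu\,\mathbf{1}\{x<A\}$, the function $\phi_A$ jumps \emph{down} by $\nu-\beta A>0$ at $x=A$ whenever $A<\nu/\beta$; the analogous jump for $g$'s integrand is downward whenever $A<1/(1+R)$. And it is not merely the argument that fails: in the concentrated limit $c\to\infty$ one has $f(A,\mu)=\nu$ for $\mu<A$ and $f(A,\mu)=\beta\mu$ for $\mu\ge A$, so for fixed $A<\nu/\beta$ the map $\mu\mapsto f(A,\mu)$ drops from $\nu$ to $\beta A<\nu$ at $\mu=A$. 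By continuity the same non-monotonicity occurs for large finite $c$, so the claim ``$g$ and $f$ are non-decreasing in $\mu$ for every fixed $A$'' is false in this model, and the value-iteration induction as written does not go through.

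A natural repair is to bootstrap on your first step: if $J$ is non-decreasing, then every $A<\nu/\beta$ is weakly dominated in the Bellman objective by $A=\nu/\beta$, so one may run value iteration on the restricted action set $A\in[\nu/\beta,1]$ without changing the fixed point. On that set $\phi_A$ \emph{is} non-decreasing and your MLR argument gives monotonicity of $f(A,\cdot)$; however $g$'s integrand is still non-monotone on $[\nu/\beta,1/(1+R))$, so closing the loop for $g$ needs a separate argument rather than a one-line stochastic-dominance appeal.
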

For small rewards, it may be possible for a loan-granting entity to focus more on the advantaged group that provides a higher profit. As the reward increases, the institution is more likely to provide a loans to disadvantaged populations only if it causes no more harm than a policy with a lower reward.

Constraining our reward, $R \leq \frac{\beta}{\nu} - 1$, we show whether fair policies in this model will lead to convergent equilibria for groups. In essence, these policies enforce an ordering such that if $\mu^{(j)} > \mu^{(i)}$, then $A^{(j)} \leq A^{(i)}$. By enforcing threshold $R \leq \frac{\beta}{\nu} - 1$, and by extension $A \geq \frac{\nu}{\beta}$, the model focuses on the cases in which institutions set a reward that is based upon a reasonable interest, $R$. If any institution wants to guarantee that they will not enforce a bifurcation among demographics, we will show that there must be some notion of fair treatment among groups. In addition, by disallowing a bifurcation, the institution has the potential to increase its long-term utility over a group-blind, unconstrained policy.

\begin{theorem}
Under the optimal thresholding policy for the described dynamics, with reward $R \leq \frac{\beta}{\nu} - 1$, under any fairness constraint, $\tau$, the two groups will always reach parity.
\label{thm:parity} 
\end{theorem}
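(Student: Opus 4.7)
The plan is to reduce the two-group dynamics to iterates of the single one-dimensional map $g(\mu) = f(\tau(\mu),\mu)$. This reduction is immediate from Proposition~\ref{prp:policy}, which expresses $\tau$ as a function of only the shared parameters $(s,k_1,k_2,c)$ and the group's own mean; consequently both groups follow the same update at every step. It then suffices to show that $g$ has a unique globally attracting fixed point $\mu^\star\in[0,1]$, since this forces both $\mu^{(0)}_t$ and $\mu^{(1)}_t$ to converge to $\mu^\star$, which is precisely Definition~\ref{Asy:EQ}. The same scheme will cover the optimal constrained policy even when the selection rate $s$ depends on the joint state, provided the contraction estimate below is uniform in $s$.

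Existence of $\mu^\star$ is straightforward: $g$ is continuous on $[0,1]$ (using smoothness of the inverse regularized incomplete Beta and dominated convergence for $p_+,\mu_+$), and $g$ sends $[0,1]$ into itself because $g(\mu)$ is a convex combination of $\beta\mu_+\le 1$ and $\nu\le 1$ with weights $p_+$ and $1-p_+$. Brouwer, or the intermediate value theorem applied to $g(\mu)-\mu$, then yields a fixed point.

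The crucial step is global attraction, and this is where both the reward and fairness hypotheses enter. A direct calculation gives $\partial_A f(A,\mu) = \pi_t(A)(\nu - \beta A)$, which is non-positive precisely when $A \ge \nu/\beta$. The reward hypothesis $R \le \beta/\nu - 1$, via Lemma~\ref{lma:reward}, guarantees that every operative threshold lies in exactly this regime; equivalently, we are on the stable branch of the equilibrium curve $\mu_\infty(A)$ from Theorem~\ref{prop:Eq} and Proposition~\ref{prp:socialwelfare}. Proposition~\ref{prp:policy} also supplies $\tau'(\mu)\ge 0$. Writing $g'(\mu) = \partial_A f \cdot \tau'(\mu) + \partial_\mu f$, the first term is therefore $\le 0$, so $g'(\mu) \le \partial_\mu f$; and $\partial_\mu f$ itself is controlled by the asymmetry factor $\beta < 1$ on the accepted fraction together with the pull of the denied fraction toward the constant $\nu$. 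Combining these ingredients yields $|g'(\mu)| < 1$ uniformly on $[0,1]$, so $\mu^\star$ is unique and globally attracting.

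The main obstacle is turning the sketch of $\partial_\mu f$ above into a rigorous uniform bound: both the density $\pi_t(x;\mu)$ and the integration limit $\tau(\mu)$ move with $\mu$, and the inverse incomplete Beta in $\tau$ has no closed form. I would sidestep symbolic differentiation by a coupling: realize both populations via a common uniform quantile, using monotonicity of Beta quantiles in the mean to pair samples with $x_1 - x_0 \le \mu_1 - \mu_0$, and then use the sign information above to conclude $|g(\mu_1) - g(\mu_0)| \le \lambda\,|\mu_1 - \mu_0|$ for some $\lambda < 1$ uniform in $s$. Iterating this contraction gives $\mu^{(0)}_t, \mu^{(1)}_t \to \mu^\star$ and hence $|\mu^{(0)}_t - \mu^{(1)}_t| \to 0$, which is the required parity.
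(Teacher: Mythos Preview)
Your route is genuinely different from the paper's. You aim for a global contraction $|g'(\mu)|<1$ of the one-group map $g(\mu)=f(\tau(\mu),\mu)$; the paper never attempts this. It instead argues by \emph{ordering}: from $\mu^{(1)}\le\mu^{(2)}$ and the monotonicity of $\tau$ in $\mu$ it gets $A^{(1)}\le A^{(2)}$, and since the fixed-threshold equilibrium curve $\mu_\infty(A)$ from Theorem~\ref{prop:Eq} is strictly decreasing on $\{A\ge\nu/\beta\}$ (this is the only place Lemma~\ref{lma:reward} is used), the lower group is attracted toward a higher target than the upper group. The two trajectories must therefore cross, and once $\mu^{(1)}_t=\mu^{(2)}_t$ they coincide forever because both groups then receive the same threshold. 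No Lipschitz constant is ever computed.

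Your contraction argument, by contrast, has a real gap. The chain rule gives $g'(\mu)=\partial_A f\cdot\tau'(\mu)+\partial_\mu f$, and your sign analysis yields only the \emph{upper} bound $g'(\mu)\le\partial_\mu f$; you never address $g'(\mu)>-1$. Since $\partial_A f=\pi(A;\mu,c)(\nu-\beta A)$ and $\tau'(\mu)\ge 0$, that product can be large and negative (for large $c$ the density is arbitrarily tall near its mode, and nothing bounds $\tau'$), so $g'$ can drop well below $-1$. The proposed quantile coupling does not rescue this: the claim ``$x_1-x_0\le\mu_1-\mu_0$'' asserts that $\mathbf{Beta}(\cdot;\mu,c)$ quantiles are $1$-Lipschitz in $\mu$ at fixed $c$, which is false. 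Already at $c=2$, $\mu=\tfrac12$ (the uniform distribution), a direct computation gives
\[
\left.\frac{\partial Q_{1/2}}{\partial\mu}\right|_{\mu=1/2}
=-\frac{\int_0^{1/2}\partial_\mu\pi\,dx}{\pi(1/2)}
=-\,2\!\int_0^{1/2}\log\frac{x}{1-x}\,dx
=2\log 2\approx 1.386>1,
\]
so the coupling inflates rather than contracts. Even the upper half of your estimate, $\partial_\mu f<1$, is only asserted (``controlled by $\beta<1$''), not proved; the $\mu$-derivative of $\int_A^1(\beta x-\nu)\pi(x;\mu,c)\,dx$ involves digamma terms and is not obviously dominated by $\beta$. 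Without a valid contraction the scheme does not close. The paper's ordering argument sidesteps all of this: it needs only the sign of $\partial_A f$ on $A\ge\nu/\beta$ and the monotonicity of $\tau$, both of which you already identified correctly.
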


\begin{proof}
   We separate fairness constraints into two cases. blind policies and group fairness policies.
   
   \paragraph{ Blind Policy } Under a blind policy, in which both groups always have the same threshold, as there is a single unique $\mu_{\infty}$ for each threshold, both groups are drawn to the same $\mu_{\infty}$. Thus, both groups will always reach parity.
   
   \paragraph{ Group Fairness Policies } For any group's threshold under a fixed $\tau$, $A^{(i)} = I^{-1}_{1-\tau}( \mu^{(i)} c + k_1, c - \mu^{(i)} c + k_2 )$, \ref{lma:reward} states that the optimal policy for each group will be $\tau$ such that $A^{(i)} \geq \frac{\nu}{\beta}, \forall\ i$. $\mu_{\infty}$ is strictly decreasing for $A \geq \nu$. 
   
   Under group fairness policies, if $\mu^{(1)} \leq \mu^{(2)}$, due to the inverse beta regularized function being a monotonically increasing function with respect to $\mu$, threshold $A^{(1)}$ is guaranteed to be less than or equal to $A^{(2)}$. As $\mu_{\infty}$ is strictly decreasing for all $A \geq \frac{\nu}{\beta}$, then $\mu_{\infty}^{(1)} \geq \mu_{\infty}^{(2)}$. If $\mu^{(1)} \leq \mu^{(2)}$ and $\mu_{\infty}^{(1)} \geq \mu_{\infty}^{(2)}$, then as each $\mu$ steps to their individual fixed points, they must intersect. When they intersect, the population reaches parity and from that moment on, both groups share the same fixed point.
\label{Proof:thm:parity}   
\end{proof}

\section{Dynamics Under Stereotypes and Misestimation}

Up to this point, we have been focusing on the ideal setting in which the distribution of repayment probabilities is known exactly. However, in real-world scenarios, we have a degree of uncertainty in a distribution's ordering. In other words we have focused on the setting in which we know $\pi_t(x) = \mathbf{Beta}(x;\mu_t,c)$ exactly. However in practice we will likely not know the true parameters of this distribution, and even will a proper mean a difference between the estimated and true $c$ values can lead to a divergence between the ``true'' underlying distribution and the estimated one.   

For the simplest case, suppose that we predict probabilities at the limit of available features, ie. measuring based solely on the protected attribute. In this case, the probability of repayment reduces to a point-mass centered at the demographic's mean ability to repay. If the institution enacts a race-blind policy with threshold above the average of the disadvantaged group, then this group will revert to $\nu$, while the advantaged group evolves according to the dynamics above where $p_+ = 1$, and thus ultimately reverts to mean $\nu$ as well.

Any other degree of information will cause a population to evolve according to some combination of the true update from Assumption \ref{assump:dynamics}, and the group mean $\mu_t$. We thus introduce a mispecification parameter, $\alpha$, that encapsulates this uncertainty in a distribution's ordering. As stated above, in the limit where $\alpha = 1$, the policy is effectively selecting randomly, and the group declines according to $\beta \mu$. As more information is known, $\alpha \rightarrow 0$, then the population evolves according to the true dynamics above. We formalize a new evolution function,
\begin{assumption}
The dynamics of an arbitrary group distribution $\pi_t$ with policy $\tau(\mu_{t}) = A$, are governed by the following update for true mean $\mu_{t}$ and mispecification parameter, $\alpha$ (with $c$ remaining constant over time),
\begin{align}
\begin{split}
f( A, \mu_{t}) &= \beta \cdot P_+( A, \mu_{t}) \cdot ( ( 1 - \alpha ) \mu_+( A, \mu_{t} ) + ( \alpha\ \mu_t ) ) \\
&+ \nu \cdot ( 1 -  P_+( A, \mu_{t} ).
\end{split}
\end{align}
\end{assumption}

Generally for two demographics, as stated above, a high enough $\alpha$ has the potential to lead to a bifurcation, however there remains a possibility for two groups to reach equality as long as these groups have an equal amount of uncertainty.

\begin{figure}
	\includegraphics[scale=0.5]{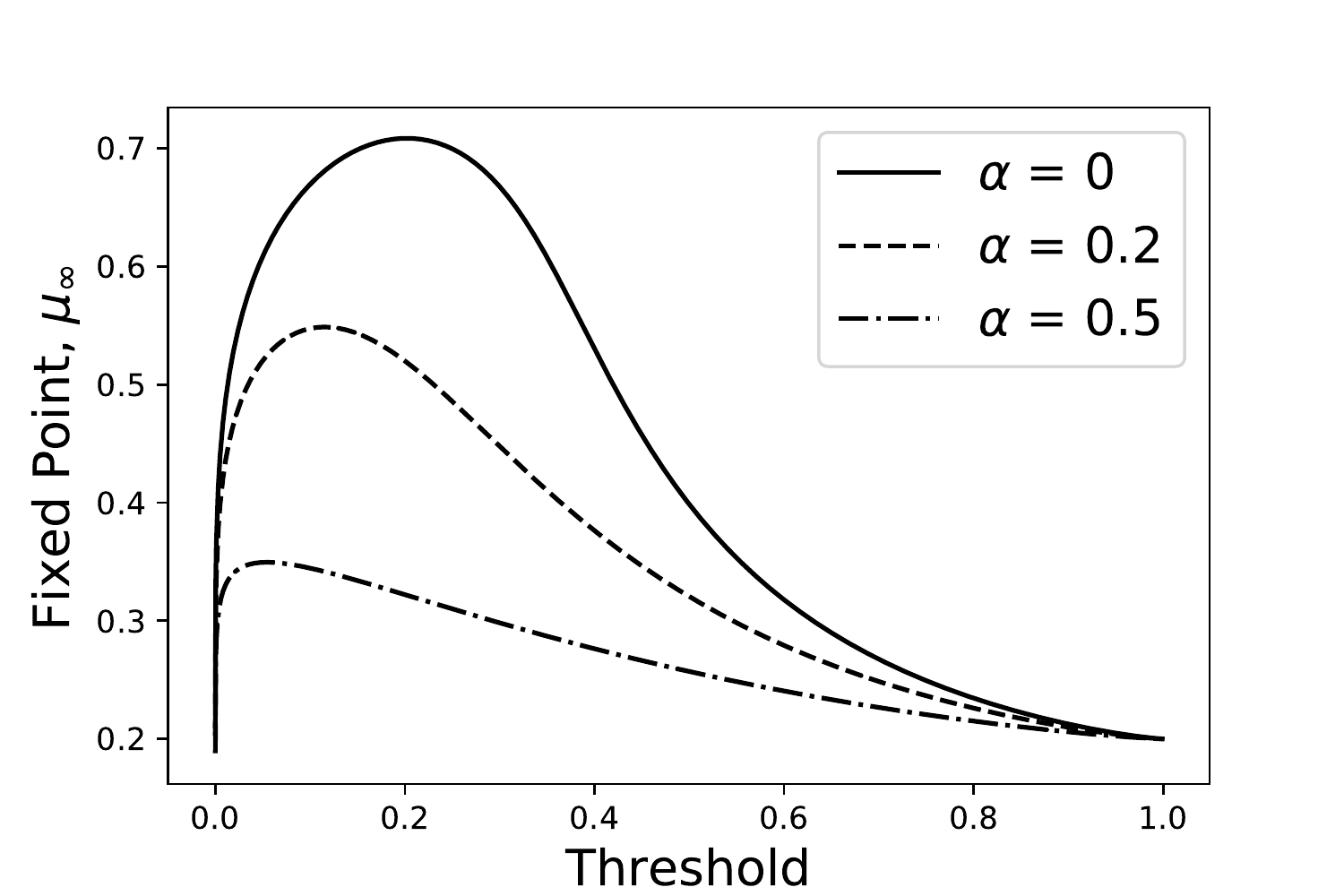}
	\label{fig:misestimate}
	\caption{Different levels of misestimation lead to thresholds that share completely distinct fixed points from one another. Here, $\nu = 0.2$, $\beta = 0.99$, $c = 2$} 
\end{figure}

\begin{theorem}
Under the optimal thresholding policy for two groups with equal shapes, $c$, thresholds, $A$, constrained above $\frac{\nu}{\beta}$ according to lemma \ref{lma:reward}, and any fairness constraint, $\tau$, as long as the two groups have equal $\alpha$ mispecifications, they will still reach parity. 
\label{thm:uncertainparity}
\end{theorem}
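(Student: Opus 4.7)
The plan is to mimic the structure of the proof of Theorem~\ref{thm:parity}, treating the modified update as a one-parameter perturbation of the original dynamics and showing that the two ingredients actually used in the original proof --- (i) uniqueness and stability of a fixed point $\mu_\infty(A)$ for each threshold $A$, and (ii) strict monotonicity of $\mu_\infty(A)$ in $A$ on the region $A \geq \nu/\beta$ --- are preserved whenever the two groups share the same misspecification $\alpha$. Once these are in place, the blind-policy case and the group-fairness case go through verbatim.

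First, I would extend Theorem~\ref{prop:Eq} to the misestimated dynamics. Fix $\alpha \in [0,1]$ and a threshold $A$, and define
\begin{equation}
f_\alpha(A,\mu) = \beta\, p_+(A,\mu)\bigl[(1-\alpha)\mu_+(A,\mu) + \alpha\mu\bigr] + \nu(1-p_+(A,\mu)).
\end{equation}
Note that $f_\alpha = (1-\alpha) f_0 + \alpha\, \tilde f$, where $f_0$ is the original update of Assumption~\ref{assump:dynamics} and $\tilde f(A,\mu) = \beta p_+(A,\mu)\mu + \nu(1-p_+(A,\mu))$. Both $f_0(A,\cdot)$ and $\tilde f(A,\cdot)$ map $[0,1]$ into $[0,1]$, are continuous in $\mu$, and are contractions (or shown to be contractions in Appendix~\ref{proof:prop:EQ} for $f_0$; for $\tilde f$ the factor $\beta < 1$ together with the boundedness of $p_+$ gives the bound). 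A convex combination of contractions with the same contraction direction is again a contraction, so $f_\alpha(A,\cdot)$ has a unique attracting fixed point $\mu_\infty^\alpha(A)$ that $\mu_{t+1} = f_\alpha(A,\mu_t)$ converges to from any start.

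Second, I would show that $\mu_\infty^\alpha(A)$ is strictly decreasing in $A$ on $A \geq \nu/\beta$. The cleanest route is implicit differentiation of the fixed-point equation $\mu = f_\alpha(A,\mu)$. Writing $p_+$ and $\mu_+$ for the Beta distribution with parameters $(\mu c,(1-\mu)c)$, the partial derivative $\partial_A f_\alpha$ has the same sign structure as in the $\alpha = 0$ case because adding the $\alpha\mu$ term shifts the selected-population mean toward $\mu$ but does \emph{not} change how $p_+$ responds to $A$: raising $A$ shrinks $p_+$ and thus pushes the outcome toward $\nu$, which is below $\beta\mu$ precisely when $A \geq \nu/\beta$ (by Proposition~\ref{prp:socialwelfare} this is exactly where the ``harmful'' side of the equilibrium curve lies). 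Combining $\partial_A f_\alpha < 0$ with the contraction $|\partial_\mu f_\alpha|<1$, the implicit function theorem gives $d\mu_\infty^\alpha/dA < 0$ on this region. This is the key technical step and the main obstacle; I expect it to require the same kind of careful sign analysis of the incomplete-beta function used in Appendix~\ref{proof:prop:EQ}, together with the observation that the $\alpha\mu$ term only rescales an already-negative derivative.

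With these two facts, the original argument closes the theorem. For a blind policy both groups have identical thresholds $A^{(0)}_t = A^{(1)}_t$ at every step, identical dynamics $f_\alpha$ (same $\alpha$, same $c$), and hence identical unique attractors, so $|\mu^{(0)}_t - \mu^{(1)}_t| \to 0$. For a group-fairness policy from Proposition~\ref{prp:policy}, the map $\mu \mapsto I^{-1}_{1-s}(\mu c+k_1,(1-\mu)c+k_2)$ is monotonically increasing, so $\mu^{(0)}_t \leq \mu^{(1)}_t$ implies $A^{(0)}_t \leq A^{(1)}_t$; Lemma~\ref{lma:reward} keeps both thresholds in the regime $A \geq \nu/\beta$; and step-(ii) monotonicity then yields $\mu_\infty^\alpha(A^{(0)}_t) \geq \mu_\infty^\alpha(A^{(1)}_t)$. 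As in the proof of Theorem~\ref{thm:parity}, the two trajectories are each pulled toward fixed points ordered oppositely to their current values, so by continuity of $f_\alpha$ the trajectories must cross; from the crossing point onward they share the same state and therefore the same update, giving $\lim_{t\to\infty}|\mu^{(0)}_t - \mu^{(1)}_t|=0$ as required by Definition~\ref{Asy:EQ}.
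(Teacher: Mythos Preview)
Your plan mirrors the paper's proof: both split into the blind-policy and group-fairness cases, both rest on (i) uniqueness of the fixed point $\mu_\infty^\alpha(A)$ under the perturbed dynamics and (ii) monotone decrease of $\mu_\infty^\alpha$ past the social-welfare threshold, and both finish with the same crossing argument borrowed from Theorem~\ref{thm:parity}.

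Two execution-level discrepancies are worth flagging. First, Appendix~\ref{proof:prop:EQ} does \emph{not} prove that $f_0(A,\cdot)$ is a contraction; it establishes existence of a fixed point via the intermediate value theorem and then argues uniqueness by a numerical sweep over $(A,c,\beta,\nu)$. Your appeal to that appendix for a contraction constant is therefore misplaced, and the ``convex combination of contractions'' step would need an independent argument (note that $\partial_\mu p_+(A,\mu)$ for the Beta family is not uniformly small, so $\tilde f$ being a contraction is not immediate from $\beta<1$ alone). Second, the paper does not simply assert monotonicity on $A\geq\nu/\beta$; it recomputes the maximizer of $\mu_{t+1}$ under the $\alpha$-perturbed update and observes that the peak shifts \emph{leftward} from $\nu/\beta$ (toward, or past, $A=0$), which is precisely what guarantees that the monotone-decreasing region still contains $\{A\geq\nu/\beta\}$. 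You assert this containment but do not verify it; the paper's explicit recomputation of the peak is what closes that gap.
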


\begin{proof}
We again split this proof into a blind policy case and a group fairness case. Trivially, if $\alpha = 1$, then the group distributions are a point-mass at their current mean, so every thresholding policy will harm the population and force all groups to $\nu$. If $\alpha = 0$, then the problem is the same as Theorem \ref{thm:parity}. 

\paragraph{Blind Policy} $\mu_{t+1}$ is now a function of both the current true $\mu_{t}$, approximate optimal threshold, $A$, whereas in the previous case, $\mu_{t+1}$ was a function of only the true optimal threshold, $A$. With uncertainty, the previous equation changes to
\begin{align}
\begin{split}
f( A, \mu_{\infty} ) - \mu_{\infty} &= \beta\ ( 1 - \alpha ) P_+( A, \mu_{\infty} ) \mu_+( A, \mu_{\infty} ) \\
&+ ( 1 -  P_+( A, \mu_{\infty} ) )\ \nu \\
&+ \mu_{\infty} ( \beta P_+( A, \mu_{\infty} ) \alpha - 1 ) = 0.
\end{split}
\end{align}

This equation is identical to the earlier function for $\mu_{\infty}$ except that $\beta$ is scaled by $1 - \alpha$ and it adds the term $\mu_{\infty} ( \beta P_+( A; \mu_{\infty} ) \alpha - 1 )$ to the previous sum. This new term is always negative, so the peak of the $\mu_{\infty}$ curve is shifted closer to $A = 0$. In this new equation, there still remains a unique $\mu_{\infty}$ for each threshold $A$, so under a blind policy where both groups have the same threshold, they also share the same $\mu_{\infty}$. Thus, a blind policy still reaches equality.  

\paragraph{Group Fairness Policies} We find that $\mu_{t+1}$ is maximized at $A = \frac{\alpha \beta \mu - \nu }{ \beta( 1 - \alpha ) }$, so again the social welfare policy is a threshold here. Using lemma \ref{lma:reward}, the threshold $A$ under the optimal policy is always greater than or equal to $\frac{\alpha \beta \mu - \nu }{ \beta( 1 - \alpha ) }$. 

This expression $A = \frac{\alpha \beta \mu - \nu }{ \beta( 1 - \alpha ) } \in \mathbb{R}$ instead of $A = \frac{\nu}{\beta} \in [ 0, 1 ]$. As such, the optimal social welfare threshold must be the threshold, $A$, that minimizes the absolute value of the derivative of the $\mu_{t+1}$ with uncertainty, 
\begin{align}
\begin{split}
f( A,\mu_{t} ) &= \beta \cdot P_+( A, \mu_{t}) \cdot ( ( 1 - \alpha ) \mu_+( A, \mu_{t} ) + ( \alpha\ \mu_{t} ) ) \\
&+ \nu \cdot ( 1 -  P_+( A, \mu_{t} ) ).
\end{split}
\end{align}
If this derivative is negative for all $A \in [0,1]$, then no matter what thresholding policy is used, the total population will be harmed. Similarly, if the derivative is positive, then the population will be improved for all $A$. 

When the derivative does not have a root at any $A \in [0,1]$, every policy will update $\mu_t$ until there exists a root for some $A \in [0,1]$. Once $\mu^{(i)}$ for both groups reaches a point where the root exists for some $A$, then the dynamics from the previous proof take over. The optimal policy will always be greater than $\frac{\alpha \beta \mu - \nu }{ \beta( 1 - \alpha ) }$, so if $\mu^{(1)} < \mu^{(2)}$, then $A^{(1)} \leq A^{(2)}$, so $\mu_{\infty}^{(1)} \geq \mu_{\infty}^{(2)}$. Thus, the population $\mu$'s must converge to reach their respective equilibria.

It should be noted that greater levels of uncertainty $\alpha$ correspond to a greater influence of the current mean, so as $\alpha$ increases, $\max \mu_{\infty}$ decreases. So a greater level of uncertainty decreases the potential final mean of the population.
\end{proof}

Alternatively, if two groups have different $\alpha$, then we show below that they cannot reach equality.

\begin{theorem}
Under the optimal thresholding policy for two groups with equal shapes, $c$, thresholds, $A$, constrained above $\frac{\nu}{\beta}$, and any fairness constraint, if two groups do not have the same $\alpha$ mispecifications, they will never converge except in trivial instances. 
\end{theorem}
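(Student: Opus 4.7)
The plan is to mirror the two-case split (blind policy vs.\ group-aware fairness) used in Theorem~\ref{thm:uncertainparity}, but to exploit the fact that $\alpha^{(1)} \neq \alpha^{(2)}$ makes the one-step update a genuinely different function of $(A, \mu)$ for each group. The central observation is the algebraic identity
\[
f_1(A, \mu) - f_2(A, \mu) \;=\; \beta\, P_+(A,\mu)\,(\alpha^{(2)} - \alpha^{(1)})\,(\mu_+(A,\mu) - \mu),
\]
obtained by direct subtraction of the two misspecified dynamics. Because Lemma~\ref{lma:reward} constrains thresholds to $A \geq \nu/\beta > 0$, for any Beta-distributed population we have $P_+(A,\mu) > 0$ and the conditional mean $\mu_+(A,\mu)$ strictly exceeds $\mu$, so this discrepancy is bounded away from zero whenever $\alpha^{(1)} \neq \alpha^{(2)}$.

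For the blind case, both groups always share the same threshold $A$, so I would apply the identity to argue that the $\alpha$-parameterized fixed point equation $\mu_{\infty} = f_\alpha(A, \mu_{\infty})$ has right-hand side strictly monotone in $\alpha$ at fixed $(A, \mu_{\infty})$. Combined with the uniqueness of $\mu_\infty$ per threshold established inside the proof of Theorem~\ref{thm:uncertainparity}, this yields $\mu_{\infty}^{(1)} \neq \mu_{\infty}^{(2)}$, and the groups settle at distinct equilibria. For the group-aware case, I would argue by contradiction. Assume $|\mu_t^{(1)} - \mu_t^{(2)}| \to 0$ with common limit $\mu^*$; by continuity of $\tau$, the thresholds also converge to $A^* = \tau(\mu^*) \geq \nu/\beta$. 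Evaluating the identity at $(A^*, \mu^*)$ gives a strictly positive gap, and by joint continuity of $f_i$, $P_+$, $\mu_+$, and $\tau$, this gap persists on a full neighborhood of $(A^*, \mu^*)$. Since both trajectories eventually enter this neighborhood, $|\mu_{t+1}^{(1)} - \mu_{t+1}^{(2)}|$ is then bounded below by a positive constant, contradicting convergence.

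The ``trivial instances'' carved out by the statement are exactly the regimes where the identity above vanishes: either $P_+(A^*, \mu^*) = 0$ (policies that deny every loan, so both groups revert to $\nu$ regardless of $\alpha$) or $\mu_+(A^*, \mu^*) = \mu^*$ (only at $A^* = 0$ for a non-degenerate Beta, which is excluded by Lemma~\ref{lma:reward}). The main obstacle is the contradiction step in the group-aware case, since one must rule out oscillatory or asymmetric behavior in which persistent one-step gaps are somehow absorbed as the trajectories are pulled toward their distinct (yet nearby) fixed points. The clean resolution is the uniform-neighborhood continuity argument sketched above, which converts the one-step discrepancy into an eventual lower bound on $|\mu_t^{(1)} - \mu_t^{(2)}|$ and thereby contradicts the parity hypothesis directly.
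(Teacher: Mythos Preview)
Your central observation coincides with the paper's: the paper computes
\[
\frac{\partial f}{\partial \alpha} \;=\; \beta\, P_+(A,\mu)\,\bigl(\mu - \mu_+(A,\mu)\bigr) \;<\; 0 \quad \text{for } A \in (0,1),
\]
which is precisely the infinitesimal version of your finite-difference identity $f_1 - f_2 = \beta P_+(\alpha^{(2)}-\alpha^{(1)})(\mu_+ - \mu)$. From there, however, the arguments diverge in structure. The paper draws the conclusion directly at the level of equilibrium curves: since $f$ is strictly decreasing in $\alpha$, the fixed-point map $A \mapsto \mu_\infty$ is shifted strictly downward for the group with larger $\alpha$, so (outside of the accidental threshold pairs it labels ``trivial'') the two groups cannot share an equilibrium. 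It does not split into blind versus group-aware cases, and it does not argue dynamically.

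Your treatment is more careful on exactly that front. Mirroring the case split of Theorem~\ref{thm:uncertainparity} and then running a contradiction-plus-continuity argument for the group-aware case is a genuine addition: it closes the gap between ``the fixed-point curves are disjoint'' and ``the trajectories cannot meet,'' which the paper leaves implicit. The trade-off is that your contradiction step requires continuity of the optimal selection parameter $s$ (and hence of $\tau$) in the joint state $(\mu^{(1)},\mu^{(2)})$, which is plausible but not established anywhere in the paper; the paper's equilibrium-curve argument sidesteps this by never invoking the policy's regularity. Your enumeration of the trivial instances as the zeros of $P_+$ and of $\mu_+ - \mu$ is also cleaner than the paper's somewhat informal list, though note that the paper additionally flags the knife-edge case where two \emph{distinct} thresholds (one per group) happen to land on the same $\mu_\infty$; your contradiction argument automatically rules this out once $\mu^{(1)}=\mu^{(2)}$ forces $A^{(1)}=A^{(2)}$ under any $\tau$ of the form in Proposition~\ref{prp:policy}.
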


\begin{proof}
Given dynamics of the form, 
\begin{align}
\begin{split}
f( A, \mu_{t}) &= \beta \cdot P_+( A, \mu_{t}) \cdot ( ( 1 - \alpha ) \mu_+( A, \mu_{t} ) + ( \alpha\ \mu_t ) ) \\ 
&+ \nu \cdot ( 1 -  P_+( A, \mu_{t}),
\end{split}
\end{align}
the derivative of $f$ with respect to the mispecification, $\alpha$ is,
\begin{equation}
\frac{ \partial f }{ \partial \alpha } = \beta P_{+}( A, \mu_{t} ) ( \mu_{t} - \mu_{+} ( A, \mu_{t+1} ) ).
\end{equation}
This derivative is always negative for $A \neq 0$, so the function is strictly decreasing for non-trivial thresholds, $A \neq 0, 1$. As the dynamics are strictly decreasing with $\alpha$, the fixed point with respect to a specific threshold is strictly lower for any two uncertainty $\alpha$ parameters. 

In this case, where the mispecification parameters are not equal, there always exist a set of two thresholds in the system with an $\alpha_1$ that can map to a pair thresholds with $\alpha_2$, that share a fixed point, however this case is trivial as there is no guarantee that these thresholds will be chosen under a policy. This mapping only comes about as the range of function of $\mu_{\infty}$ with $\alpha_0$ and $\alpha_1 > 0$, is a subset of the range of every other function of $\mu_{\infty}$ with $\alpha_1$ and $\alpha_1 > \alpha_0 > 0$.
\end{proof}

It should be noted that the trivial cases that we mention above consist of a set of four thresholds that ensure convergence regardless of misestimation or initial distribution. This set is $\tau$ such that everyone from both groups are denied loans, which forces the mean of both groups to $0$; $\tau$ such that everyone from both groups are granted loans, which also forces the mean of both groups to $0$; the last two policies come out of the fact that if an institution is able to set a different acceptance threshold for every group, specific threshold can be found on either side of the peak that leads to the same long-term parameters. Regarding the last case, unless the thresholded proportion, defined bby $s$ in proposition \ref{prp:policy}, is specified to an incredibly specific and sufficiently large level of numerical precision, this case will not occur.

Prior work \cite{mouzannar2019fair} has similarly analyzed the effect of these stereotypes in a similar setting, however while dynamics that govern the Bernoulli distribution of repayment probabilities show that when a equal treatment constraint causes a decrease in loans to the advantaged group, and $\mu_t$ is underestimated for the disadvantaged group, regardless of whether the advantaged group is underestimated or overestimated, the populations will still reach equality. A second case, where a greater percentage of the disadvantaged group is granted a loan, with a negative bias on the advantaged group and a positive bias in the advantaged group also leads to equality. However, under our model, regardless of the acceptance rates for each group, as long as they do not have the same level of misestimation, they cannot reach equality. This implies to us that the effects of stereotypes is more model-specific than has previously been explored.

\section{Simulated Results}


\begin{figure}
	\includegraphics[scale=0.4]{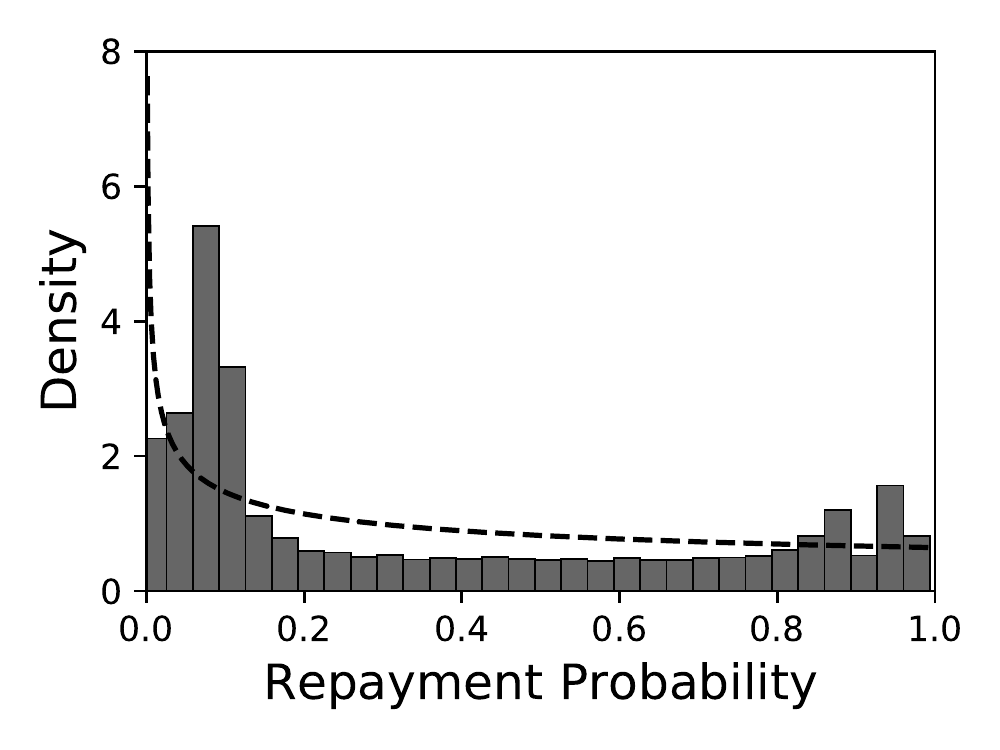} 
	\includegraphics[scale=0.4]{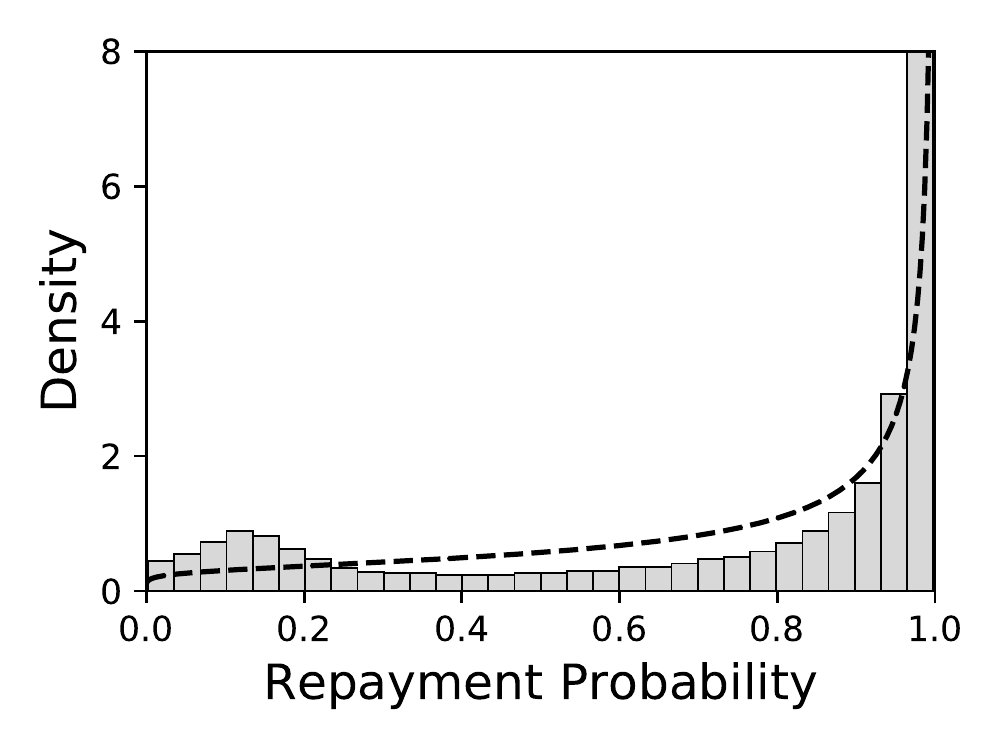}

	\includegraphics[scale=0.55]{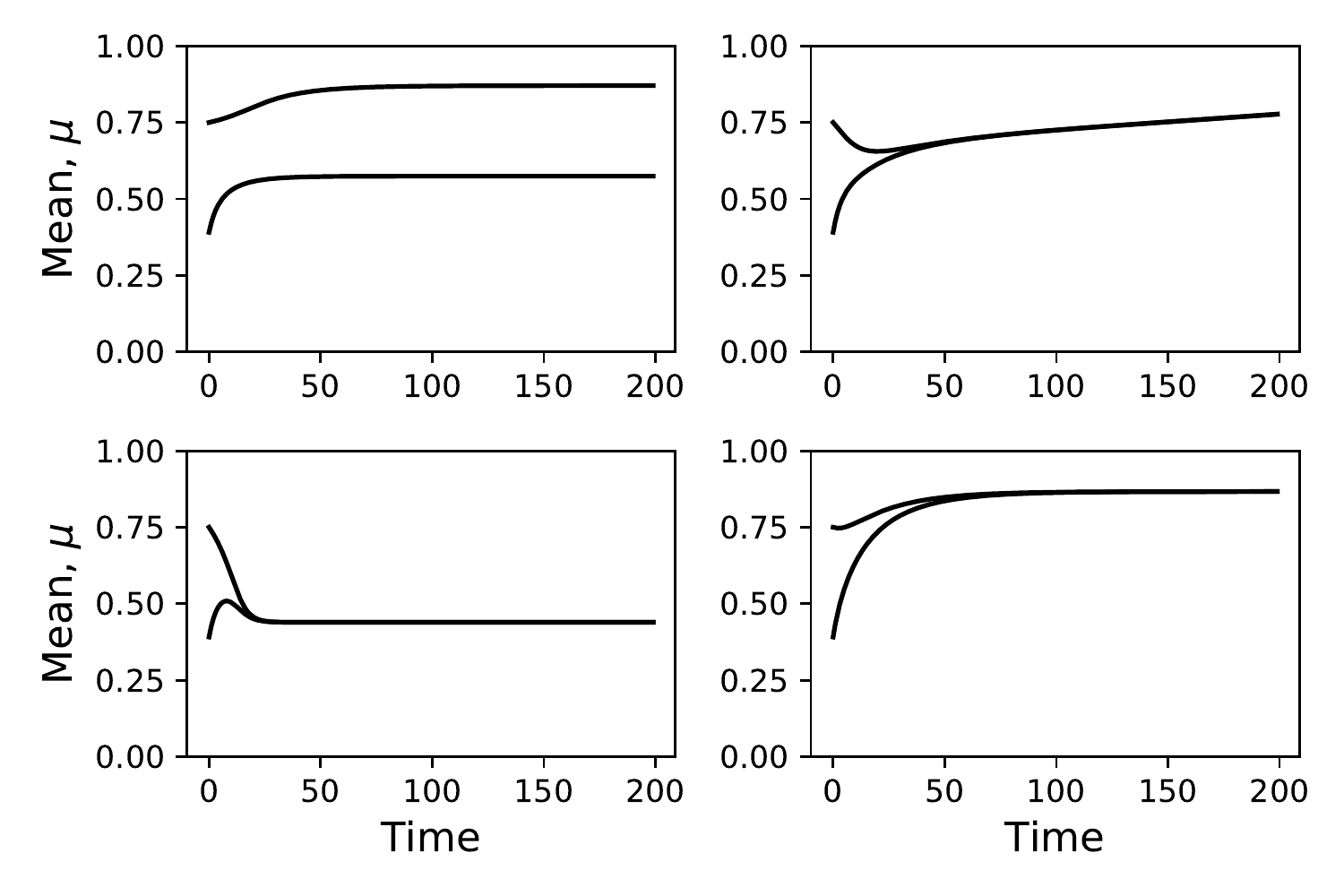}
	\caption{(Top) The histogram of repayment probabilities by race (Left = Black, Right = Non-Hispanic White). (Center, Left) The change in population wellbeing for individuals under an unconstrained optimal policy. (Center, Right) The change in population wellbeing for individuals under demographic parity. (Bottom, Left) The change in population wellbeing for individuals under a blind policy. (Bottom, Right) The change in population wellbeing for individuals under Equality of Opportunity.}
	\label{fig:real}
\end{figure}

We examine the impact of several fairness policies according to the dynamics introduced above. We simulate the evolution of populations on the data set of 301,526 preprocessed TransUnion TransRisk scores from \cite{reserve2007report,hardt2016equality}. This data set is separated by race including Black, Non-Hispanic White, Hispanic, Asian, and includes a cumulative distribution over credit scores of the population for each race and a distribution of 90-day delinquency rates over credit scores with respect for each race.  We process this information into a probability distribution of repayment probabilities in a three step process. We first calculate the derivative of a smooth interpolation of the given cumulative distribution in order to receive the probability of individuals in each race having a specific score. We then find the histogram of non-delinquency rates after 90 days with respect to the probability distribution of credit scores in Figure \ref{fig:real}. Both the histogram of delinquency rates and distribution of scores are plotted parametrically, in order to describe the repayment probability vs density per demographic.
 
After retrieving a histogram of repayment probabilities, we fit a Beta distribution to the graph and calculate $\mu$ and $c$ as in assumption \ref{assumption:def}. These parameters are used to find the optimal control policy for two races (Black, Non-Hispanic White) for each notion of fairness treatment. We then observe the evolution of $\mu$ under each policy. As stated previously, our model requires the shape $c$ to be equal for all groups, we fix the shape as the average of any two group shapes. In addition we assume a profit loss ratio for the lender of $R = 0.21$ and no uncertainty in the model, $\alpha = 0$. 
 
In the case of this simulated data, removing all fair constraints leads to a bifurcation; the advantaged group becomes much better off, while the disadvantaged group remains stagnant. Whereas adding constraints that enforce Equality of Opportunity improves both groups and leads them to positive equilibria. The blind policy leads groups to equilibria as well, however, it instead harms both groups, leading them to a worse off state than every other policy.

Yet, special attention should be drawn to the demographic parity example. Demographic Parity ultimately creates a positive equilibria, however, it first causes temporary harm  in the advantage population in order to lead both groups to a positive equilibria, where everyone ends up better of. While the policy causes active harm \cite{liu2018delayed} for one group, this harm does not dominate long-term dynamics, over time the policy leads to a better state than where the advantaged group began.

In the limit, policies that maximize group well-being, maximize reward, so a lender in the case presented here is better off under a fair policy. However, it is important to note that whether a specific fair policy remains optimal compared to an unconstrained optimal policy or another fair policy depends on the task at hand and initial conditions. Modeling the expected effects of these policies would allow institutions to determine the best policy for a given utility function.

\section{Discussion and Future Work}

This work considered the iterative processes of fair decision making.  We formalized a Beta-distribution-based population model, showed that several existing fairness conditions can be expressed as special cases of a general class of fair policies, and then analyzed a proposed dynamical update equation for this model.  We showed that under these conditions, although unconstrained policies can lead to bifurcations of different population means, \emph{any} fair policy according to our proposition \ref{prp:policy} will lead to a single equilibrium for both groups.  However, under the case of \emph{mis-estimated} repayment propabilities, these bifurcations can continue for two different groups.

Although these results naturally depend on the modeling assumptions, they highlight that fact that under reasonable conditions, ensuring that data-driven decision-making system accurately estimate group distributions is a major requirement for reaching equitable long-term world states. As historically underrepresented groups traditionally suffer from worse predictive performance, this is a important but worthwhile challenge.

\paragraph{Future Work} Here, we focused on one possible model for population dynamics. However, there are many equally or more valid models for modeling a population in a resource granting setting. Even in this case, we found that the behavior of stereotypes and misestimation is more model specific than we feel has been previously explored. A larger and more thorough explanation of how misestimation statistically affects outcomes on a population and ways of mitigating these effects is an important avenue for enforcing fair treatment among groups. 

Additionally, this work emphasized constraints that enforce fair policies, however, many methods of fair treatment may not have clear constraints, for more example, a decision being counterfactually fair \cite{kusner2017counterfactual}. As fair treatment is not universally defined, researchers have many opportunities of finding new ways to determining the impact of more abstract notions of fair treatment. 

Training classifiers to conform to a given notion of fair treatment, requires an understanding of whether or not these classifiers will actually be beneficial. As shown in our results, it is entirely possible for an unconstrained ``un-fair'' policy to lead to a better world-state than a tightly constrained fair policy. Just policies should not  only formally treats individuals equitably but ensure that decisions will result in equitable world-states for individuals who agree with and conform to a given idea of fair treatment. 

\bibliographystyle{ACM-Reference-Format}
\bibliography{citations}

\appendix
\pagestyle{plain}
\onecolumn
\section*{Appendix}

\section{ Proof of Proposition \ref{prp:policy} }
\begin{proof}
	We separate this proof into 4 sections, the first describes how to formalize demographic parity as a function of the $\tau$ proportion, the second describes how to formalize equality of opportunity as a function of the $\tau$ true positive rate, the third describes how to formalize equalized odds as a function of the $\tau_1$ true positive rate, and $\tau_2$ false positive rate, the fourth describes how to formalize blindness as a function of the $\tau$ threshold, and the fifth is a corollary that restates the possibility of these separate definitions of fair treatment coming from a single class of fair policies parameterized by $k_1$ and $k_2$,where $A^{(i)} = I^{-1}_{1 - \tau}( \mu^{(i)} c + k_1, c - \mu^{(i)} c + k_2)$.
	
	\begin{enumerate}
		\item \textbf{Demographic Parity}
		
		For the distribution of repayment probabilities, $\pi_t^{i}( x ) = \textbf{Beta}( x; \mu_t^{i}, c )$, with respect to group $i$, the proportion of the population above a given threshold $A$ is defined as $\int_{A^{(i)}}^1 \pi_t^{i}( x ) dx$.
		A policy that follows demographic parity seeks to find thresholds $A^{(i)}$, such that for each group, 
		\begin{equation}
			\int_{A^{(i)}}^1 \pi_t^{(i)}( x )\ dx = 
				\int_{A^{(j)}}^1 \pi_t^{(j)}( x )\ dx 
				\ \forall\ i, j.
		\end{equation} 
		
		By solving the integral, we find an equivalent form,
		\begin{equation}
			\int_{A}^1 \pi_t( x )\ dx = 1 - I( A; \mu c + k_1, c - \mu c ) = \tau; k_1 = 0
		\end{equation}
		
		By solving this new equation where $I$ is the incomplete beta regularized function, we then fix some $tau$ proportion that each group, parameterized by $\mu$, must adhere to. We find the threshold by solving the equation for $A$ threshold
		\begin{equation}
			A^{(i)} = I^{-1}_{1 - \tau}( \mu^{(i)} c + k_1, c - \mu^{(i)} c ); k_1 = 0
		\end{equation}
		for any group $i$.
	
		\item \textbf{Equality of Opportunity}
			For the distribution of repayment probabilities, $\pi_t^{i} = \textbf{Beta}( x; \mu_t^{i}, c )$, with respect to group $i$, the true positive rate of repayment is given by the expected repayment ability of those who are granted a loan compared to the expected repayment ability of the entire population. We then define a policy that satisfies equality of opportunity as some thresholds $A^{(i)}, A^{(j)}$ that satisfy
		\begin{equation}
			\frac{ \int_{A^{(i)}}^1 x\ \pi_t^{(i)}( x ) dx }{ \mu_t^{(i)} } = 
			\frac{ \int_{A^{(j)}}^1 x\ \pi_t^{(j)}( x ) dx }{ \mu_t^{(j)} }.
		\end{equation}

		As in demographic parity we simplify this integral to,
		\begin{equation}
		\frac{ \int_{A}^1 x\ \pi_t( x ) dx }{ \mu } = 1 - I( A; \mu c + k_1, c - \mu c ) = \tau; k_1 = 1
		\end{equation}
		
		We then solve this equation for each group to find their individual thresholds that correspond to the fixed $\tau$.  
		\begin{equation}
			A^{(i)} = I^{-1}_{1 - \tau}( \mu^{(i)} c + k_1, c - \mu^{(i)} c ); k_1 = 1
		\end{equation}
		for any group $i$.
		
		\item \textbf{Equalized Odds}
		Equalized Odds is a further constraint on Equality of Opportunity, in which both the true positive and false positive rates must match among each group, formally written as
\begin{equation}
Pr \left\{ \hat{Y} = 1\ |\ i = 0, Y = y  \right\} = Pr \left\{ \hat{Y} = 1\ |\ i = 1, Y = y  \right\}; y \in {0,1}
\end{equation}
		The case for the true positive rate, $y = 1$ is the equality of opportunity case above. We now specify how to formulate the matching false positive rates.
		
		For the distribution of repayment probabilities, $\pi_t^{i}( x ) = \textbf{Beta}( x; \mu_t^{i}, c )$, with respect to group $i$, the false positive rate of repayment is given by $1 - $ the expected repayment ability of those who are granted a loan, compared to $1 - $ the expected repayment probability of the entire population. This can be formally written as:
		
		\begin{equation}
		\frac{ \int_{A^{(i)}}^1 ( 1 - x )\ \pi_t^{i}( x )\ dx }{ 1 - \mu_t^{(i)} } =
\frac{ \int_{A^{(j)}}^1 ( 1 - x )\ \pi_t^{j}( x )\ dx }{ ( 1 - \mu_t^{(j)} ) } \forall\ i,j.
		\end{equation}
		
		As before, we simplify these integrals,
		\begin{equation}
		\frac{ \int_{A}^1 ( 1 - x )\ \pi_t( x )\ dx }{ 1 - \mu_t } = \\
        1 - I( A; \mu c, c - \mu c + k_2 )  = \tau; k_2 = 1.
		\end{equation}

		We again solve this equation for the matching false positive rates, $\tau$,
		\begin{equation}
			A^{(i)} = I^{-1}_{1 - \tau}( \mu^{(i)} c, c - \mu^{(i)} c + k_2); k_2 = 1.
		\end{equation}
		
		A policy that satisfies Equalized Odds is then taken as any policy in the intersection of this false positive threshold and the Equality of Opportunity case. There is only one non-trivial $\tau$ that satisfies Equalized Odds for any two groups.
		 
		\item \textbf{Blindness}
		We define a blind policy as a policy where a single threshold $A$ is used for all groups. Generally this could be expressed as some $\tau \in [0,1]$, however for consistency, we can also formulate this case using the inverse beta regularized function. 
		
		For the complete beta function described as,
		\begin{equation}
			\mathbf{Beta}( a, b ) = \frac{\Gamma( a ) \Gamma( b )}{ \Gamma( a + b ) }
		\end{equation},
		
		If $a \rightarrow \infty$, the influence of $b \rightarrow 0$ and vice versa. This way for the beta function parameterized by $\mu$ and $c$, if $k_1$ or $k_2$ approaches infinity, the distinction between some $\mu^{(i)}$ and $\mu^{(j)}$ becomes negligible. This way, any policy where $k_1$ or $k_2$ approaches infinity functionally treats and two groups as equal. Due to this effect, we then define a shared, blind threshold as,
		\begin{equation}
			A^{(i)} = I^{-1}_{1 - \tau}( \mu^{(i)} c + k_1, c - \mu^{(i)} c + k_2); k_2 \rightarrow \infty\ or\ k_1 \rightarrow \infty.
		\end{equation}  
		for all groups $i$.
		
		\item \textbf{General Policy Class}
		In each of the above cases, scaling $k_1$ and $k_2$ give several different policies that correspond to different notions of fair treatment in the literature. We conjecture that as $k_2$ and $k_1$ increase, this leads to stricter notions of individual fair treatment culminating in blindness where there is no difference in treatment for individuals of different groups. As such, we use 
		\begin{equation}
			A^{(i)} = I^{-1}_{1 - \tau}( \mu^{(i)} c + k_1, c - \mu^{(i)} c + k_2)
		\end{equation}  
		as a general class of fair policies where Demographic Parity, Equality of Opportunity, Equalized Odds, and Blindness are special cases.
	\end{enumerate}
	\end{proof}
\label{proof:prp:policy}

\section{Proof of Theorem \ref{prop:Eq} }
	
\begin{proof}

Any equilibirum point of our dynamics, $\mu_\infty$ is given by a root of the equation
\begin{equation}
f(A,\mu) - \mu.
\end{equation}
It is straightforward to observe that there exists \emph{some} stable equilibrium point, because for $\mu = 0$
\begin{equation}
f(A,\mu) - \mu = \nu > 0
\end{equation} 
for any threshold $A$.  Similarly, for $\mu=1$,
\begin{equation}
f(A, \mu) - \mu = \beta - 1 < 0.
\end{equation}
Thus, the curve must intersect $f(A,\mu) - \mu$ at some point, and must approach it from the positive direction on the left and the negative direction on the right; this correponds to a stable attracting equilibrium.

Due to the somewhat complex form of these functions, it is more difficult to show analtyically that there exists a \emph{unique} equilbrium point.  However, the number of free parameters involved in these equations is small ($A$, $c$, $\beta$, $\nu$), and so it is trivial to numerically show that for a grid over these values, the equilbrium point is always unique.  The one exception here is for $c$, which is unbounded in range, but because the distribution collapses to a point mass as $c \rightarrow \infty$, after some threshold $c_{\max}$, futher increases in $c$ will have only $\epsilon$ effect on the resulting roots.

\end{proof}
\label{proof:prop:EQ}

\section{ Proof of Proposition \ref{prp:socialwelfare} }
	
\begin{proof}
According to our dynamics, we define an update for the population mean, $\mu$, as
\begin{equation}
\mu_{t+1}( A; \mu_t) = \beta \cdot p_+( A; \mu_{t}) \cdot \mu_+( A; \mu_{t}) + 
	\nu \cdot ( 1 -  p_+( A; \mu_{t})),
\end{equation}
where $p_+$ is the proportion of the population above the threshold $A$, $\mu_+$ is the mean repayment ability of individuals above the threshold $A$, and $\beta$, $\nu$ are free parameters. 

We find the threshold that maximizes this function by finding the derivative with respect to $A$, 
\begin{equation}
\frac{d \mu_{t+1} }{ d A } = - \frac{ A^{-1 + c \mu } ( 1 - A )^{-1 + c - c \mu } ( A \beta - \nu ) }{ \mathbf{Beta}( \mu, c ) }. 
\end{equation}

For $A \in (0, 1)$, this equation has only one extrema at $\frac{\nu}{\beta}$. Additionally, if $A < \frac{\nu}{\beta}$, then this derivative is positive and if $A > \frac{\nu}{\beta}$, then the derivative is negative, so the extrema at $A = \frac{\nu}{\beta}$, must be the maximum. 

As a result at any time $t$, a threshold at $A = \frac{\nu}{\beta}$ results in the greatest benefit to social welfare. 
\end{proof}
\label{proof:prp:socialwelfare}

\section{ Proof of Proposition \ref{prop:gamma} }

\begin{proof}
For a single step, the optimal policy, $\tau_{\gamma}( \mu )$ is found as,
\begin{equation}
J_\gamma^\star(\mu) = \max_{A} g(A, \mu),
\end{equation}
which corresponds to maximizing our reward function,
\begin{equation}
g( A, \mu ) = p_+( A, \mu) (( 1 + R ) \mu_+(A,\mu)  - 1))
\end{equation}

\begin{align*}
\frac{\partial g}{\partial A} &= - \frac{ A^{-1 + c \mu } ( 1 - A )^{-1 + c - c \mu } ( -1 + A + A R ) }{ \mathbf{Beta}( \mu, c ) } \\
0 &= - \frac{ A^{-1 + c \mu } ( 1 - A )^{-1 + c - c \mu } ( -1 + A + A R ) }{ \mathbf{Beta}( \mu, c ) } \\
0 &= A^{-1 + c \mu } ( 1 - A )^{-1 + c - c \mu } ( -1 + A + A R ) \\
0 &= -1 + A + AR \\
1 &= A + AR \\
1 &= A( 1 + R )\\
A &= \frac{1}{1+R}  
\end{align*}

The derivative has a single extrema at $A = \frac{1}{R + 1}$. Similarly to the social welfare case, if $A < \frac{1}{R + 1}$, then the derivative is positive, and if $A > \frac{1}{R + 1}$, then the derivative is negative, so the extrema must be a maximum point. 

Thus, the policy that maximizes reward at a single step is $A = \frac{1}{R + 1}$.

\label{proof:prop:gamma}
\end{proof}	

\section{ Proof of Lemma \ref{lma:reward} }

We claim that by setting the reward, $R \leq \frac{1}{\nu} - 1$, the optimal control model is never incentivized to set the threshold less than $\nu$. The model is always incentivized to make decisions that either increase the current mean, which increases the ultimate reward, or maximizes their current reward. 

In the first case, $\frac{d \mu_{t+1}}{d A}$ is strictly increasing if $A \leq \frac{\nu}{\beta}$, so when the optimal control model tries to improve the state at the next step, the optimal threshold is at $A = \frac{\nu}{\beta}$. 

The reward, 
\begin{equation}
g( A ) = P_{+}(A) ( \mu_{+}(A) (R + 1 ) - 1 ) =  \int_{A}^1 \pi( x ) ( Rx + x - 1 ) dx,
\end{equation}
at any time $t$ is maximized by selecting threshold, $A$, that maximizes this integral.

$\pi( x ) ( Rx + x - 1 ) < 0$ for all $x < \frac{1}{1 + R}$. If $R \leq \frac{\beta}{\nu} - 1$, then, this expression is always negative for $A < \frac{\nu}{\beta}$. So, if $A < \frac{\nu}{\beta}$, the reward is always lower than $A \geq \frac{\nu}{\beta}$. Thus the optimal control model is incentivized to choose an $A \geq \frac{\nu}{\beta}$.

Any policy chosen will be a trade-off of the state maximization and reward maximization at the current state, and in either case, the policy has no incentive to select a threshold less than $\frac{\nu}{\beta}$. In realistic scenarios, $R$ should not reach this bound. If the resource (ie. loan) is beneficial, then $\nu$ can be assumed to be small. For example if $\nu = 0.2$, then the reward of $R = 4$ corresponds to a bank making 4 times the profit from their loan over the set time horizon, and which is unreasonable in a short time horizon.

As a note, if $R = \frac{\beta}{\nu} - 1$, then the optimal policy is a fixed policy at $\frac{\nu}{\beta}$. As we've shown before, the optimal social welfare policy is also a fixed policy at $A = \frac{\nu}{\beta}$. In this case, under the greedy policy, fixing $A = \frac{1}{R + 1}$, if $R = \frac{\beta}{\nu} - 1$ then both the greedy and social welfare policies coincide.

\label{proof:lma:reward}

\end{document}